\documentclass{article}

\usepackage{microtype}
\usepackage{graphicx}
\usepackage{subcaption}
\usepackage{booktabs} 

\usepackage[table]{xcolor}
\usepackage{tabularx}
\usepackage{array}
\usepackage{multirow}
\usepackage{colortbl}
\usepackage{siunitx}
\usepackage{makecell}

\usepackage{hyperref}

\usepackage[accepted]{icml2026}

\usepackage{amsmath}
\usepackage{amssymb}
\usepackage{mathtools}
\usepackage{amsthm}

\usepackage[capitalize,noabbrev]{cleveref}

\theoremstyle{plain}
\newtheorem{theorem}{Theorem}[section]
\newtheorem{proposition}[theorem]{Proposition}

\theoremstyle{definition}

\theoremstyle{remark}

\icmltitlerunning{ParisKV: Fast and Drift-Robust KV-Cache Retrieval}

\begin{document}

\twocolumn[
\icmltitle{ParisKV: Fast and Drift-Robust KV-Cache Retrieval for Long-Context LLMs}

\icmlsetsymbol{equal}{*}
\begin{icmlauthorlist}
\icmlauthor{Yanlin Qi}{lipade}
\icmlauthor{Xinhang Chen}{xjtu}
\icmlauthor{Huiqiang Jiang}{qwen}
\icmlauthor{Qitong Wang}{harvard}
\icmlauthor{Botao Peng}{ict}
\icmlauthor{Themis Palpanas}{lipade}
\end{icmlauthorlist}

\icmlaffiliation{lipade}{LIPADE, Universit\'e Paris Cit\'e, Paris, France}
\icmlaffiliation{xjtu}{Xi'an Jiaotong University, Xi'an, China}
\icmlaffiliation{qwen}{Qwen Team, Alibaba Group, China}
\icmlaffiliation{harvard}{Harvard University, Cambridge, MA, USA}
\icmlaffiliation{ict}{Institute of Computing Technology, Chinese Academy of Sciences, Beijing, China}

\icmlcorrespondingauthor{Botao Peng}{pengbotao@ict.ac.cn}
\icmlkeywords{Machine Learning, ICML}

\vskip 0.3in
]

\printAffiliationsAndNotice{}

\begin{abstract}
KV-cache retrieval is essential for long-context LLM inference, yet existing methods struggle with distribution drift and high latency at scale. 
We introduce ParisKV, a drift-robust, GPU-native KV-cache retrieval framework based on collision-based candidate selection, 
followed by 
a quantized inner-product reranking estimator. 
For million-token contexts, ParisKV supports CPU-offloaded KV caches via Unified Virtual Addressing (UVA), enabling on-demand top-$k$ fetching with minimal overhead.
ParisKV matches or outperforms full attention quality on long-input and long-generation benchmarks.
It achieves state-of-the-art long-context decoding efficiency: it matches or exceeds full attention speed even at batch size 1 for long contexts, delivers up to 2.8$\times$ higher throughput within full attention's runnable range, and scales to million-token contexts where full attention runs out of memory. 
At million-token scale, ParisKV reduces decode latency by 17$\times$ and 44$\times$ compared to MagicPIG and PQCache, respectively, two 
state-of-the-art KV-cache Top-$k$ retrieval baselines; code is available at \url{https://github.com/amy-77/ParisKV/tree/main}.

\end{abstract}

\section{Introduction}

Large language models (LLMs) \cite{achiam2023gpt, touvron2023llama, jiang2023mistral, yang2025qwen3} are rapidly extending their context windows to millions of tokens, placing unprecedented pressure on inference efficiency.
The KV-cache, which stores keys and values of all preceding tokens for autoregressive decoding, quickly dominates both memory footprint and latency as context grows~\cite{zhang2023h2o, li2024snapkv, chen2024magicpig}.
Crucially, long-context decoding is memory-bound: each step requires reading a large volume of KV vectors, and bandwidth cost scales linearly with context length.
This has motivated sparse/selective attention, which exploits the empirical sparsity of attention by attending only to a Top-$k$ subset of relevant past tokens.

Among sparse designs, KV-cache dropping/eviction methods permanently discard tokens and can be brittle when early tokens become critical later.
In contrast, KV-cache retrieval retains the full KV history and dynamically retrieves relevant keys at each step, making it better suited for open-ended long-context inference.





\begin{figure}[!t]
  \centering
  
  \includegraphics[width=\linewidth]{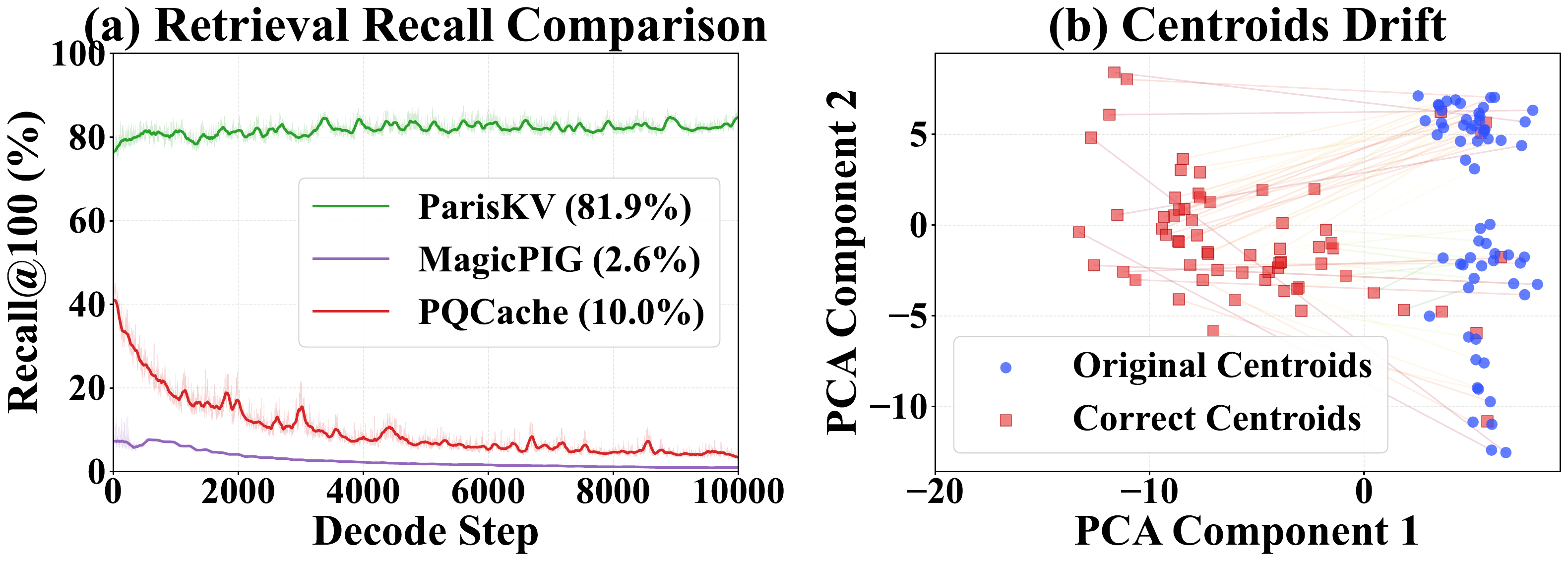}
  \caption{Retrieval drift results. (a) Recall comparison of different methods on AIME. (b) Centroid drift induced by decoding keys, measured as the mismatch between \emph{prefill-only} centroids (original centroids in blue) and \emph{reference} centroids (correct centroids in red) obtained by clustering all keys from both prefill and decoding.}
  \label{fig:retrieval_drift}
        \vspace*{-0.6cm}
\end{figure}



\textbf{Challenges}: Despite its accuracy advantages, KV-cache retrieval faces the following efficiency challenges in latency-sensitive, long-generation scenarios.
(C1) Speed--quality tradeoff:
Lightweight approximate retrieval (e.g., coarse clustering, low-bit quantization) sacrifices recall for speed; recovering accuracy requires increasing the retrieval budget, eroding the benefits of sparsity.
(C2) Decoding drift:
Centroids learned from clustering prefill keys become stale as generation continues.
Figure~\ref{fig:retrieval_drift}(a) shows an example, where prior methods~\cite{zhang2025pqcache, chen2024magicpig}  
suffer severe recall collapse during long decoding, while ParisKV maintains stable recall.
Figure~\ref{fig:retrieval_drift}(b) visualizes the root cause: prefill-only centroids (in blue) increasingly mismatch the true key distribution (in red) as decode keys accumulate.
%
(C3) CPU-bound retrieval and data movement:
When the full KV cache is offloaded to CPU memory, retrieval typically involves CPU-side search and CPU-to-GPU data transfer.
The GPU can only access representations (centroids or low-bit codes), which introduce approximation errors, while CPU orchestration dominates end-to-end latency.

\textbf{Our approach.}
We present ParisKV, a hardware-aware algorithm--system co-design for KV-cache retrieval that achieves fast decoding, stable recall under long input and/or long-generation (i.e., million-token scalability), even in the presence of drift.

Unlike prior methods that learn centroids from prefill keys, ParisKV transforms queries and keys into a stable, data-independent space by normalizing them onto the unit hypersphere (see Fig.~\ref{fig:sphere_centers}) and applying a shared random orthogonal rotation, where we can analytically define a set of uniformly distributed centroids. 
These centroids are uniformly distributed on the unit hypersphere, so any newly generated keys will always be close to at least one centroid. 
In this way, ParisKV avoids stale centroids (i.e., centroids that become mismatched 
during decoding), and maintains stable retrieval quality throughout long generations.


ParisKV adopts a two-stage pipeline implemented entirely on GPU.
The coarse stage uses multi-subspace collision counting to aggressively prune candidates; the fine stage reranks them using calibrated inner-product estimates from compact representation of original keys, achieving high recall without accessing full-precision keys on CPU.

Moreover, ParisKV offloads the full KV cache to CPU memory and leverages Unified Virtual Addressing (UVA) to let GPU kernels directly fetch only the selected Top-$k$ KV pairs on demand, bypassing explicit memcpy and CPU-side scheduling overhead.

\textbf{Contributions.}
\textbf{(1)} We propose drift-robust analytic centroids, a data-independent transformation that maintains stable retrieval recall even under significant distribution shift at the million-token scale.
\textbf{(2)} We design a GPU-native coarse-to-fine retrieval pipeline combining collision-based pruning and calibrated reranking, achieving high-fidelity Top-$k$ selection directly on compressed codes without CPU-side search.
\textbf{(3)} We implement a scalable UVA-based offloading system enabling on-demand KV fetching with minimal CPU intervention.
\textbf{(4)} Extensive evaluations show ParisKV delivers up to 17$\times$ and 45$\times$ speedups over MagicPIG and PQCache at 1M-token scale, while maintaining near-lossless quality on long-input and long-form reasoning benchmarks.

\paragraph{Conflict of Interest Disclosure.}
The authors declare no financial conflicts of interest.



\section{Related Work}

\subsection{KV-Cache Quantization}
Several recent studies have proposed innovative methods to tackle the challenges of KV-Cache quantization while keeping performance degradation minimal.

Quantization-friendly transformations reduce quantization difficulty via offline distribution reshaping. SmoothQuant \cite{xiao2023smoothquant} enables 8-bit KV-Cache quantization, while QuaRot, QuQuant, and SpinQuant further suppress outliers through randomized or learnable transformations \cite{ashkboos2024quarot,lin2024duquant,liu2024spinquant}.
Sparsity-aware quantization combines importance modeling with low-bit quantization; Q-Hitter \cite{zhang2024q} leverages attention statistics to jointly apply sparsity and 8-bit quantization.
Structure-aware and mixed-precision quantization exploits KV heterogeneity via offline tuning: KIVI \cite{liu2024kivi} applies asymmetric quantization for 2-bit compression, while KVTuner \cite{li2025kvtuner} determines layer-wise mixed-precision configurations.

\subsection{Sparse KV-Cache}

Sparse KV-Cache methods reduce the memory and computation cost of long-context inference by exploiting attention sparsity.
Existing work can be broadly categorized into \emph{KV-Cache Dropping} and \emph{KV-Cache Retrieval}.

\paragraph{KV-Cache Dropping.}
KV-Cache Dropping permanently removes KV entries that are predicted to be unimportant.
StreamingLLM \cite{xiao2023efficient} treats early tokens as sinks, while H2O \cite{zhang2023h2o}, SnapKV \cite{li2024snapkv}, Keyformer \cite{adnan2024keyformer}, and Expected Attention \cite{devoto2025expected} estimate token importance from attention statistics.
KeyDiff \cite{park2025keydiff} infers importance from key diversity, enabling compatibility with optimized attention implementations \cite{dao2022flashattention,dao2023flashattention}.
Structure-aware methods further exploit head and layer-level heterogeneity, including DuoAttention \cite{xiao2024duoattention}, PyramidKV \cite{cai2024pyramidkv}, AdaKV \cite{feng2024ada}, and HeadKV \cite{fu2024not}.
KVZip \cite{kim2025kvzip} identifies important historical KV entries via offline reconstruction.

\paragraph{KV-Cache Retrieval.}
KV-Cache Retrieval dynamically selects a small subset of KV entries per step without permanently removing them.
Pattern-aware sparse attention mainly targets the prefill stage, including MInference \cite{jiang2024minference}, FlexPrefill \cite{lai2025flexprefill}, XAttention \cite{xu2025xattention}, and SeerAttention \cite{gao2024seerattention}.
Decode-stage retrieval is explored by Quest \cite{tang2024quest}, while MoBA \cite{lu2025moba} and NSA \cite{yuan2025native} integrate retrieval mechanisms into training.
CPU-assisted retrieval further extends effective KV capacity, such as MagicPig \cite{chen2024magicpig}, PQCache \cite{zhang2025pqcache}, RetroInfer \cite{chen2025retroinfer}, ShadowKV \cite{sun2024shadowkv}, and RetrievalAttention \cite{liu2024retrievalattention}.
External-knowledge retrieval, including graph-based RAG frameworks~\cite{DBLP:journals/corr/abs-2506-20963}, is complementary to our setting: ParisKV retrieves from the model's internal KV cache rather than from an external corpus.

\section{Background and Problem Formulation}
\label{sec:problem}

We study \emph{KV-cache retrieval} for accelerating autoregressive decoding in Transformers.
At decoding step $t$, each attention head issues a query vector $\mathbf{q}\in\mathbb{R}^{D}$, and attends to a growing
set of keys $\mathcal{K}=\{\mathbf{k}_i\in\mathbb{R}^{D}\}_{i=1}^{n_t}$ with corresponding values
$\mathcal{V}=\{\mathbf{v}_i\}_{i=1}^{n_t}$ stored in the KV cache.
The pre-softmax attention score  between a key and the query  is $s(\mathbf{k}_i,\mathbf{q}) \triangleq \langle \mathbf{k}_i,\mathbf{q}\rangle$.
In exact attention, we compute a softmax over \emph{all} cached keys and aggregate values:
\begin{equation} \footnotesize
\alpha_i(\mathbf{q}) = \frac{\exp\left(s(\mathbf{k}_i,\mathbf{q})\right)}
{\sum_{j=1}^{n_t}\exp\left(s(\mathbf{k}_j,\mathbf{q})\right)},
\qquad
\mathbf{o}(\mathbf{q}) = \sum_{i=1}^{n_t} \alpha_i(\mathbf{q})\,\mathbf{v}_i .
\label{eq:full_attention}
\end{equation}
However, the KV cache grows with $t$, making Eq.~\eqref{eq:full_attention} increasingly expensive.

\paragraph{Top-$k$ retrieval for approximate attention.}
To reduce latency, we aim to retrieve only the top-$k$ keys with the largest inner products:
$\mathrm{TopK}(\mathbf{q}) \triangleq \arg\max_{i\in[n_t]}^{k}\ \langle \mathbf{k}_i,\mathbf{q}\rangle$.
Given a small candidate set $\mathcal{C}(\mathbf{q}) \subseteq [n_t]$ (ideally containing $\mathrm{TopK}(\mathbf{q})$),
we compute \emph{approximate attention} by restricting the softmax to $\mathcal{C}(\mathbf{q})$:
\begin{equation} \small
\tilde{\alpha}_i(\mathbf{q}) =
\frac{\exp\left(s(\mathbf{k}_i,\mathbf{q})\right)}
{\sum_{j\in \mathcal{C}(\mathbf{q})}\exp\left(s(\mathbf{k}_j,\mathbf{q})\right)},
\quad i\in \mathcal{C}(\mathbf{q}) .
\label{eq:approx_weight}
\end{equation}
\begin{equation} \small
\tilde{\mathbf{o}}(\mathbf{q}) =
\sum_{i\in \mathcal{C}(\mathbf{q})} \tilde{\alpha}_i(\mathbf{q})\,\mathbf{v}_i .
\label{eq:approx_output}
\end{equation}
When $\mathcal{C}(\mathbf{q})$ covers the true top-$k$ keys, Eq.~\eqref{eq:approx_output} closely matches
the exact output in Eq.~\eqref{eq:full_attention} while being faster. 





\section{The ParisKV Framework}\label{sec:method}

\label{sec:ParisKV_overview}

\label{sec:method_overview}
ParisKV is an algorithm--system co-design for fast long-context decoding with \emph{drift-robust} retrieval.
We first provide an overview of ParisKV, and then discuss the details.


In the \emph{prefill} phase (refer to Fig.~\ref{fig:Framework_new}), beyond computing the KV cache, ParisKV performs a one-time \emph{key summarization} to enable fast on-device retrieval during decoding.
Specifically, we (i) apply a shared normalization-and-rotation transform to make representations stable under decoding drift, and (ii) build compact per-key metadata that stays on GPU after the full-precision KV cache is offloaded to CPU.
These metadata have two parts.
First, we use a small \emph{codebook}, i.e., a small set of centroid IDs that summarize keys (A.2 in Fig.~\ref{fig:Framework_new}), which is later used to quickly generate candidates via collision-based voting (B.2.1 in Fig.~\ref{fig:Framework_new}).
Second, we store a compact quantized code per subspace for each key (A.3.1), together with a lightweight scaling factor for calibration (A.3.2).
These quantized codes and scaling factors are then used to efficiently estimate query--key inner products and rerank candidates (B.2.2) without accessing full-precision keys on GPU.
Overall, prefill produces small, GPU-resident summaries for retrieval, while the full-precision KV cache is asynchronously offloaded to CPU memory to allow scalability to long contexts.

In the \emph{decode} phase (refer to Fig.~\ref{fig:Framework_new} and Fig.~\ref{fig:ParisKV_retrieval.pdf}), ParisKV uses a two-step retrieval pipeline.
(1) Coarse Candidate Generation (B.2.1) narrows the search space using only lightweight GPU-resident \emph{codebook} summaries.
For a query, we select the few most similar centroids in each subspace; keys in these selected buckets receive one vote.
We sum votes across subspaces and keep the high-vote keys as candidates.
(2) Candidate Reranking reranks the candidates generated earlier by estimating their query-key inner products and selecting the final Top-$k$ keys for attention.
During this step, the GPU maintains compact 4-bit quantized codes of the keys, but cannot (efficiently) access the full-precision raw vectors, which are in the CPU memory. 
Thus, ParisKV computes a fast, subspace-wise inner-product estimate using the quantized codes and corrects the quantization error using a lightweight procedure (see Fig.~\ref{fig:ParisKV_retrieval.pdf} bottom), achieving high fidelity, and consequently enabling high-recall in the Top-$k$ selection. 
Estimating the query--key inner products involves many fine-grained operations, so we implement a \textit{reranking fused kernel} to streamline the computation.

\begin{figure*}[!t]
  \centering
  \includegraphics[page=2,width=0.95\textwidth]{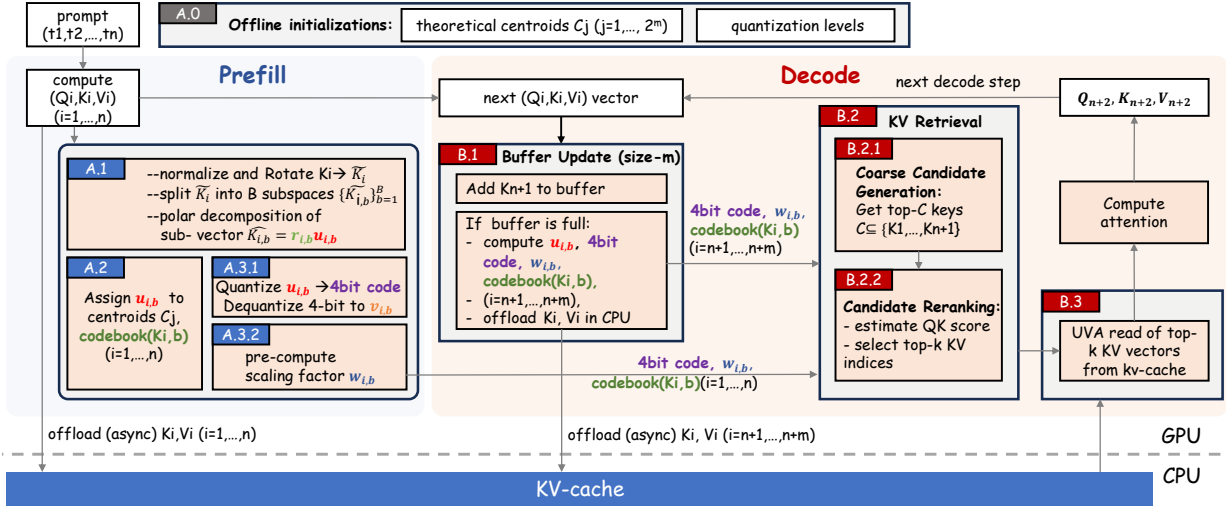}
\caption{ParisKV pipeline. Offline, we construct an analytic centroid codebook and a quantization configuration.
During prefill, we materialize the KV cache and build GPU-resident key summaries (centroid IDs for Stage-I vote-based filtering, and low-bit codes with lightweight weights for Stage-II reranking), while asynchronously offloading full-precision KV to CPU memory.
During decoding, summaries for newly generated keys are incrementally updated; the GPU performs coarse-to-fine retrieval (voting $\rightarrow$ reranking) using only these summaries, then fetches the selected Top-$k$ KV pairs from CPU via UVA for attention. }
        \vspace*{-0.3cm}

  \label{fig:Framework_new}
\end{figure*}

\begin{figure*}[!t]
    \begin{minipage}[t]{0.36\textwidth}
        \centering
    \includegraphics[width=\columnwidth]{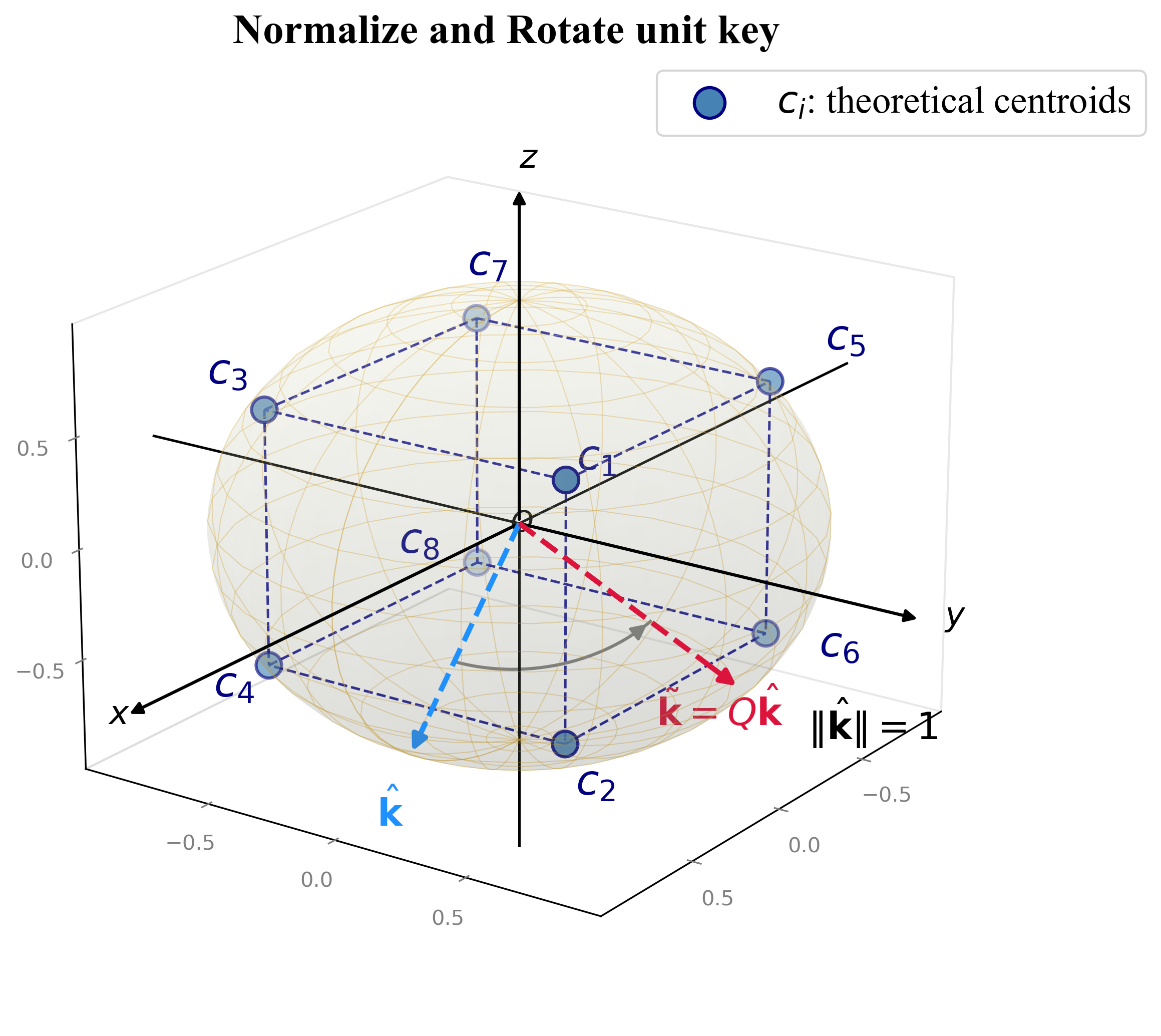}
    \caption{Illustration of rotation-based codebook assignment on the unit sphere.} 
    \label{fig:sphere_centers}
  \end{minipage}
      \begin{minipage}[t]{0.63\textwidth}

  \centering
  \includegraphics[page=1, width=\textwidth]{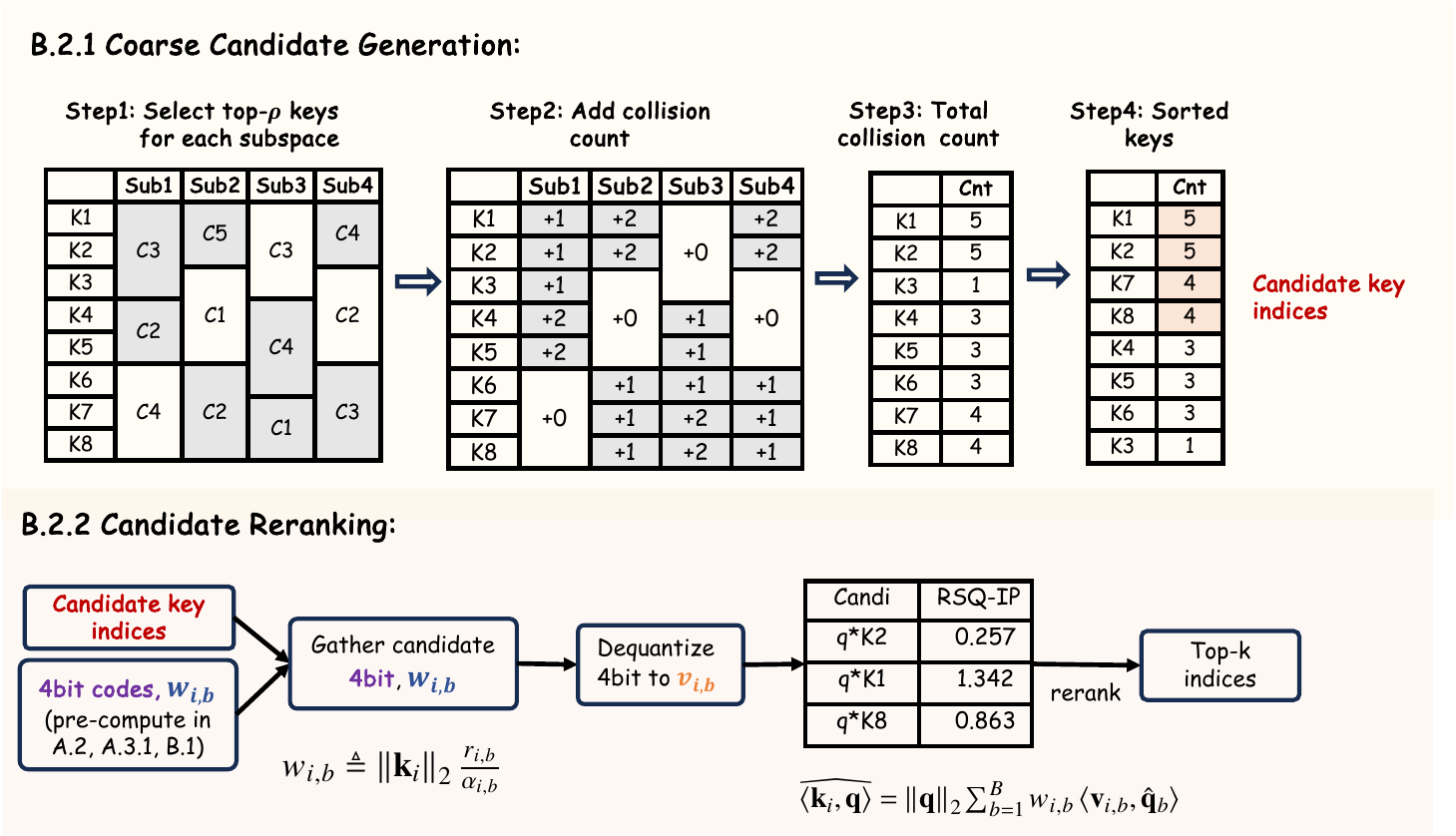}
  \caption{ParisKV Retrieval algorithm (shown as B.2 in Fig.~\ref{fig:Framework_new})}
  \label{fig:ParisKV_retrieval.pdf}
    \end{minipage}
        \vspace*{-0.6cm}

\end{figure*}

\subsection{Prefill Phase}

\subsubsection{Preprocessing} 
ParisKV applies a shared normalize–rotate transform, splits vectors into rotated subspaces, and uses a polar (radius–direction) form to summarize each subspace (A.1 in Fig.~\ref{fig:Framework_new}).
Given prefill keys $\{\mathbf{k}_i\}_{i=1}^{n}$, ParisKV constructs compact per-key metadata on GPU (queries are processed in the same way). 
It proceeds as follows.

\textbf{(1) Normalize \& rotate.}
Normalization maps all keys and queries onto the same unit-hypersphere space, so 
similarity comparisons depend only on direction (not magnitude). 
The shared rotation spreads information more evenly across dimensions while preserving inner products, making later bucketing and scoring more stable under drift.
We first $\ell_2$-normalize keys and queries:
$\hat{\mathbf{k}}_i=\frac{\mathbf{k}_i}{\|\mathbf{k}_i\|_2},\qquad
\hat{\mathbf{q}}=\frac{\mathbf{q}}{\|\mathbf{q}\|_2}.$

We then apply a shared orthogonal rotation $\mathbf{R}$ (implemented by SRHT):
$\tilde{\mathbf{k}}_i=\mathbf{R}\hat{\mathbf{k}}_i$, 
$\tilde{\mathbf{q}}=\mathbf{R}\hat{\mathbf{q}}.$
Since $\mathbf{R}$ is orthogonal, it preserves inner products:
$\langle \hat{\mathbf{k}}_i,\hat{\mathbf{q}}\rangle=\langle \tilde{\mathbf{k}}_i,\tilde{\mathbf{q}}\rangle$.

\textit{Example.}
Assume $m=3$. 
Then, the codebook has $2^3$=$8$ direction centers:
{\small
$\Omega =
\left\{
\left(\pm\tfrac{1}{\sqrt3}, \pm\tfrac{1}{\sqrt3}, \pm\tfrac{1}{\sqrt3}\right)
\right\}$, $|\Omega|=8$.
} 
These 8 centers correspond to the 8 corners of a cube.
After scaling to unit norm, they lie on the unit sphere and evenly cover the 8 ``octants''
(i.e., eight coarse directions in 3D space).
Given a rotated key $\tilde{k}$, we simply assign it to the nearest center in $\Omega$.
For instance, if $\tilde{k}$ points roughly to the ``$(+,+,-)$'' direction, it will be assigned to
$\left(\tfrac{1}{\sqrt3}, \tfrac{1}{\sqrt3}, \tfrac{-1}{\sqrt3}\right)$. (The details are shown in Fig.\ref{fig:sphere_centers})

\textbf{(2) Subspace split.}
Subspace splitting enables efficient parallel processing and provides multiple independent votes for our collision-based candidate filtering.
We partition $\tilde{\mathbf{k}}_i$ (and $\tilde{\mathbf{q}}$) into $B$ contiguous subspaces of dimension $m=D/B$:
$\tilde{\mathbf{k}}_i=[\tilde{\mathbf{k}}_{i,1};\ldots;\tilde{\mathbf{k}}_{i,B}]$ and
$\tilde{\mathbf{q}}=[\tilde{\mathbf{q}}_{1};\ldots;\tilde{\mathbf{q}}_{B}]$,
where $\tilde{\mathbf{k}}_{i,b},\tilde{\mathbf{q}}_{b}\in\mathbb{R}^{m}$.

\textbf{(3) Polar decomposition.}
 Polar form further separates direction (for voting) from radius (for calibrated inner-product estimation).
For each key $i$ and subspace $b$, define
$r_{i,b}=\|\tilde{\mathbf{k}}_{i,b}\|_2$ and $\mathbf{u}_{i,b}=\tilde{\mathbf{k}}_{i,b}/r_{i,b}$, so that
$\tilde{\mathbf{k}}_{i,b}=r_{i,b}\mathbf{u}_{i,b}$ (similarly for $\tilde{\mathbf{q}}_b$).
This yields an additive subspace form of the rotated inner product:
{\small
\begin{equation}
\langle \tilde{\mathbf{k}}_i,\tilde{\mathbf{q}}\rangle
=\sum_{b=1}^{B}\langle \tilde{\mathbf{k}}_{i,b},\tilde{\mathbf{q}}_{b}\rangle
=\sum_{b=1}^{B} r_{i,b}\,\langle \mathbf{u}_{i,b},\tilde{\mathbf{q}}_{b}\rangle.
\label{eq:blockwise_ip}
\end{equation}
}

\subsubsection{Assign Directions to Centroids}
We map each $\mathbf{u}_{i,b}$ to its nearest direction centroid $\mathbf{c}_j$ (A.2 in Fig.~\ref{fig:Framework_new}), and store the centroid ids 
codebook to use during decoding for the collision-based candidate generation.

\textbf{Data-independent uniform direction centroids.}
After $\ell_2$ normalization and SRHT rotation, subspace directions become approximately isotropic, making it effective to use a \emph{shared, data-independent} set of direction centroids instead of learning them from data.
Thus, in each $m$-dimensional subspace we predefine a sign-pattern centroid set
\begin{equation}
\Omega \triangleq \left\{ \pm \tfrac{1}{\sqrt{m}} \right\}^{m},
\qquad |\Omega| = 2^{m},
\label{eq:omega_centers}
\end{equation}
where each $\boldsymbol{\omega}\in\Omega$ is a unit direction with equal-magnitude coordinates.
Given a subspace unit direction $\mathbf{u}_{i,b}$, we assign it to the nearest centroid in $\Omega$ and store the resulting index
\begin{equation}
\text{centroid\_id}_{i,b} \triangleq \arg\max_{\boldsymbol{\omega}\in\Omega}\langle \mathbf{u}_{i,b}, \boldsymbol{\omega}\rangle.
\label{eq:dir_assign}
\end{equation}

\subsubsection{A.3 Low-Bit Rerank Metadata}
To rerank candidates without accessing full-precision keys in CPU memory, we store lightweight per-subspace summaries on-GPU:
(A.3.1) a 4-bit direction code $\text{code}_{i,b}$ that dequantizes to a reconstructed direction $\mathbf{v}_{i,b}$, and
(A.3.2) a scalar weight $w_{i,b}$ that precomputes all key-only factors during candidate reranking.
Specifically, with subspace radius $r_{i,b}=\|\tilde{\mathbf{k}}_{i,b}\|_2$ and  per-subspace correction term
\begin{equation}\small
\label{eq:correct_term}
\alpha_{i,b} \triangleq \left\langle \mathbf{v}_{i,b}, \mathbf{u}_{i,b} \right\rangle .
\end{equation}
where $\mathbf{u}_{k,b}$ is the (unknown) true unit direction of key $k$ in subspace $b$ and $\mathbf{v}_{k,b}$ is its quantized direction.
Intuitively, $\alpha_{k,b}$ measures the alignment between the quantized direction and the true direction; quantization typically shrinks this alignment, causing a systematic underestimation of the query--key inner product if we directly use $\langle \mathbf{v}_{k,b}, \tilde{\mathbf{q}}_b\rangle$.
Specifically, our reranking estimator employs the approximation
\begin{equation}\small
\langle \mathbf{u}_{k,b}, \tilde{\mathbf{q}}_b\rangle
\approx
\frac{\langle \mathbf{v}_{k,b}, \tilde{\mathbf{q}}_b\rangle}{\alpha_{k,b}},
\label{eq:rsqip_correction}
\end{equation}
we define a per-key, per-subspace scaling factor 
\begin{equation}\small \label{eq:scaling_factor}
w_{i,b}\triangleq \|\mathbf{k}_i\|_2\,\frac{r_{i,b}}{\alpha_{i,b}}.
\end{equation}
Notably, $w_{i,b}$ depends only on the key and the quantization metadata, and does \emph{not} involve the query. Therefore, it can be precomputed once during the prefill stage and cached, so that online reranking only computes lightweight dot products between the quantized direction and the rotated query, and then rescales them using the cached $w_{i,b}$ to estimate $\langle \mathbf{k}_i,\mathbf{q}\rangle$ efficiently.
Online reranking then becomes a simple weighted accumulation over subspaces:
\begin{equation} \small 
\widehat{\langle \mathbf{k}_i,\mathbf{q}\rangle}
=\|\mathbf{q}\|_2\sum_{b=1}^{B} w_{i,b}\,\langle \mathbf{v}_{i,b},\tilde{\mathbf{q}}_{b}\rangle.
\label{eq:rsqip_online}\end{equation}
The reranking inner-product estimation is detailed in Appendix~\ref{app:B.2.2_rsqip}.
Overall, prefill stores metadata
$\{(\text{centroid\_id}_{i,b},\,\text{code}_{i,b},\,w_{i,b})\}_{i\le n,\,b\le B}$ on GPU, 
while full-precision $(\mathbf{k}_i,\mathbf{v}_i)$ are kept in CPU memory for on-demand UVA fetching.

\subsection{Decode Phase}

\subsubsection{Buffer Update}
As decoding proceeds, we maintain the KV-cache as four contiguous regions
(Fig.~\ref{fig:sliding_window_update}): \emph{Sink} (a small set of early high-attention tokens kept on GPU),
\emph{Retrieval} (offloaded and indexed historical tokens),
\emph{Local} (the most recent $\text{local\_size}$ tokens kept on GPU for dense attention),
and \emph{Update Buffer} on GPU that temporarily holds newly generated tokens.

\begin{figure}[!t]
    \centering
    \includegraphics[width=\linewidth]{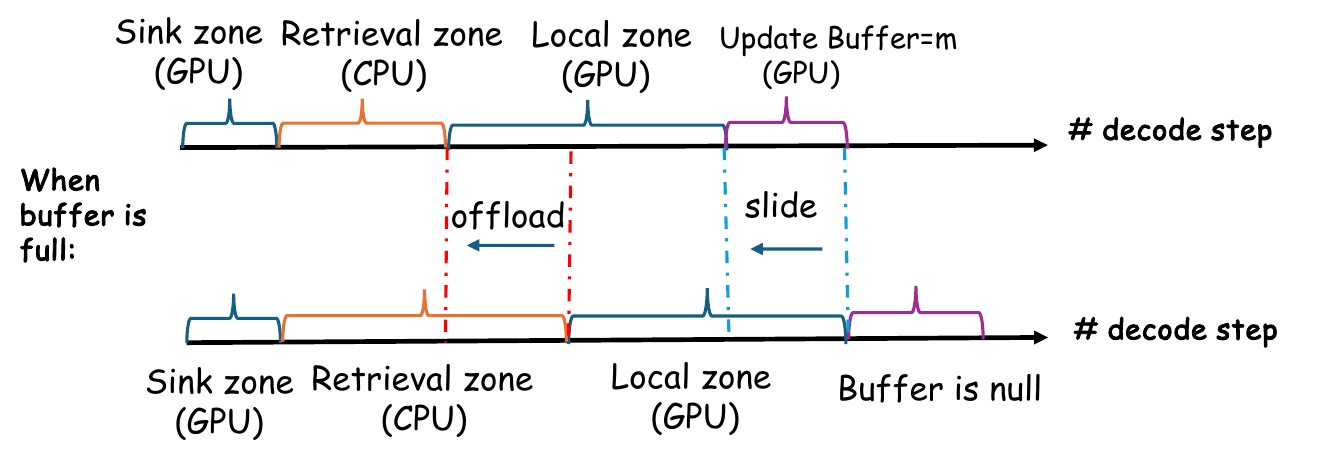}
    \caption{Sliding-window KV-cache update. }
        \vspace*{-0.6cm}
    \label{fig:sliding_window_update}
\end{figure}

We append each new token to the Update Buffer (B.1 in Fig.~\ref{fig:Framework_new}).
When the buffer reaches m tokens, we trigger a sliding-window update:
(i) the oldest m tokens in the Local region are evicted into the Retrieval region
(via GPU$\rightarrow$CPU transfer in the offload setting);
(ii) the Local window is shifted to keep the most recent $\text{local\_size}$ tokens by promoting the buffered tokens into Local
(and clearing the buffer);
and (iii) we encode and index the evicted keys on GPU (centroid ids, 4-bit codes, and $w_{i,b}$),
while offloading the corresponding full-precision KV pairs asynchronously.
This streaming update supports unbounded generation while keeping retrieval metadata fresh and preserving fast access to both recent context (Local) and long-range history (Retrieval).

\subsubsection{Retrieval}
At each decoding step, ParisKV performs a fully on-GPU two-step retrieval pipeline (Fig.~\ref{fig:ParisKV_retrieval.pdf} and B.2 in Fig.~\ref{fig:Framework_new}) to approximate Top-$k$ attention selection without accessing full-precision keys during ranking.

\textbf{(1) Coarse candidate generation} (B.2.1). We use a direction-only proxy to quickly prune the full key set to a small candidate pool.
Concretely, we decompose the rotated key/query into $B$ subspaces and score each key by accumulating subspace-wise collision bonuses: in each subspace we compare the query to the key's assigned centroid (cheap dot product $q^\top c$) and only let the top-$\rho$ fraction contribute a non-zero bonus.
Summing across subspaces yields a coarse integer score that is robust to decoding drift and enables fast pruning to the top-$\beta$ fraction as candidates (typically $\beta{=}5\%$--$10\%$).
We set $\rho\ge\beta$ so each subspace contributes sufficiently many non-zero terms, and the candidate pool remains comfortably larger than the final Top-$k$ (e.g., $k{=}100$) used in reranking.
The details are shown in Appendix~\ref{app:B.2.1_collision}.

\textbf{(2) Candidate Reranking} (B.2.2). 
We rerank candidates by estimating the raw attention score $\langle \mathbf{k},\mathbf{q}\rangle$ directly from compact 4-bit key quantizations.
Each subspace direction is encoded with \emph{1-bit sign + 3-bit magnitude} per coordinate.  Online reranking only computes lightweight dot products between the quantized direction and the rotated query, combined with a precomputed per-subspace scaling factor.
This yields an accurate estimator of raw inner products while avoiding any access to full-precision keys during reranking. 
Full-precision KV are fetched \emph{only} for the final selected Top-$k$ tokens. 
Appendix~\ref{app:B.2.2_rsqip} describes in detail the inner-product estimator used for reranking (including the correct term in Eq.~\eqref{eq:correct_term} and approximation in Eq.~\eqref{eq:rsqip_correction}).
The derivation of the theoretical quantization levels for the normalized-and-rotated subspace directions is guided by Proposition~\ref{prop:beta_priors_main} (with full derivations in Appendix~\ref{app:beta_priors}).


\begin{proposition}[Rotation-induced Beta priors for subspaces]\label{prop:beta_priors_main}
Let $\hat{\mathbf{k}}\in \mathbb{S}^{D-1}$ be any unit vector and $\mathbf{R}\in\mathbb{R}^{D\times D}$ be Haar-random orthogonal.
Let $\tilde{\mathbf{k}}=\mathbf{R}\hat{\mathbf{k}}$ and partition it into $B$ contiguous subspaces of dimension $m$ ($D=Bm$):
$\tilde{\mathbf{k}}=[\tilde{\mathbf{k}}_{1};\ldots;\tilde{\mathbf{k}}_{B}]$.

Define $r_{b}=\|\tilde{\mathbf{k}}_{b}\|_2$, $z_{b}=r_{b}^{2}\in[0,1]$, and
$\mathbf{u}_{b}=\tilde{\mathbf{k}}_{b}/r_{b}\in \mathbb{S}^{m-1}$.
Here, $z_b$ is the \emph{subspace energy fraction} (squared radius) after polar decomposition, which we use to design radius quantization centroids (Eq.~\eqref{eq:radius_beta_prop});
$\mathbf{u}_b$ is the corresponding \emph{unit direction} within subspace $b$, whose coordinate-wise distribution in Eq.~\eqref{eq:coord_beta_prop} guides the design of  quantization levels for the normalized-and-rotated representation. 
Then for any fixed $b\in[B]$ and $j\in[m]$:
\begin{align}
z_{b} &\sim \mathrm{Beta}\!\left(\tfrac{m}{2},\tfrac{D-m}{2}\right), \label{eq:radius_beta_prop}\\
(\mathbf{u}_{b})_{j}^{2} &\sim \mathrm{Beta}\!\left(\tfrac{1}{2},\tfrac{m-1}{2}\right). \label{eq:coord_beta_prop}
\end{align}
\end{proposition}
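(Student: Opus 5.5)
The first step is to reduce everything to a uniform random point on the sphere. Since $\mathbf{R}$ is Haar-distributed on $O(D)$ and $\hat{\mathbf{k}}$ is a fixed unit vector, $\tilde{\mathbf{k}}=\mathbf{R}\hat{\mathbf{k}}$ is uniformly distributed on $\mathbb{S}^{D-1}$. To see this, take any fixed $\mathbf{Q}\in O(D)$. Then $\mathbf{Q}\tilde{\mathbf{k}}=(\mathbf{Q}\mathbf{R})\hat{\mathbf{k}}$ has the same law as $\tilde{\mathbf{k}}$, by left-invariance of Haar measure. The uniform measure is the unique rotation-invariant probability measure on the sphere, so the law of $\tilde{\mathbf{k}}$ is uniform. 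From here on the randomness comes only from $\tilde{\mathbf{k}}$, and the choice of $\hat{\mathbf{k}}$ plays no role.

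Next I will use the Gaussian representation $\tilde{\mathbf{k}}\overset{d}{=}\mathbf{g}/\|\mathbf{g}\|_2$ with $\mathbf{g}\sim\mathcal{N}(0,\mathbf{I}_D)$. This holds because the standard Gaussian is rotation-invariant, so its normalization is uniform on the sphere. Split $\mathbf{g}=[\mathbf{g}_1;\ldots;\mathbf{g}_B]$ into the same contiguous blocks. Then
\[
z_b=\frac{\|\mathbf{g}_b\|_2^2}{\|\mathbf{g}_b\|_2^2+\sum_{b'\neq b}\|\mathbf{g}_{b'}\|_2^2},
\]
where the numerator is $X\sim\chi^2_m=\mathrm{Gamma}(m/2,2)$ and the remaining sum is $Y\sim\chi^2_{D-m}$, independent of $X$. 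The standard Gamma--Beta fact gives $X/(X+Y)\sim\mathrm{Beta}(m/2,(D-m)/2)$. I would either cite this or verify it quickly by the change of variables $(X,Y)\mapsto(X/(X+Y),X+Y)$: the Jacobian factorizes, which shows the ratio is independent of the sum and has the Beta density. This proves \eqref{eq:radius_beta_prop}.

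For the direction, $\mathbf{u}_b=\tilde{\mathbf{k}}_b/r_b=\mathbf{g}_b/\|\mathbf{g}_b\|_2$, because the global normalizer $\|\mathbf{g}\|_2$ cancels. Here I should note that $r_b>0$ almost surely, so $\mathbf{u}_b$ is well-defined. Since $\mathbf{g}_b\sim\mathcal{N}(0,\mathbf{I}_m)$, the vector $\mathbf{u}_b$ is uniform on $\mathbb{S}^{m-1}$. Applying the same Gamma--Beta argument inside the block gives
\[
(\mathbf{u}_b)_j^2=\frac{g_{b,j}^2}{g_{b,j}^2+\sum_{j'\neq j}g_{b,j'}^2},
\]
with $g_{b,j}^2\sim\chi^2_1$ and the remaining sum $\sim\chi^2_{m-1}$, independent of each other. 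Hence $(\mathbf{u}_b)_j^2\sim\mathrm{Beta}(1/2,(m-1)/2)$, which is \eqref{eq:coord_beta_prop}. As a by-product, $\mathbf{u}_b$ is independent of $z_b$: $\mathbf{g}_b/\|\mathbf{g}_b\|_2$ is independent of $\|\mathbf{g}_b\|_2$ by Gaussian polar independence, and also independent of the other blocks. This explains why direction and radius can be quantized separately.

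The only step needing genuine care is the first one, the claim that a Haar rotation of a fixed vector is uniform, which rests on uniqueness of the invariant measure. Everything after that is standard chi-square algebra. I would also remark that the proposition assumes an exactly Haar $\mathbf{R}$, whereas the SRHT used in practice only approximates this distribution, so the priors are heuristically accurate for the implemented transform.
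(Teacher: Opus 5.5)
Your proposal is correct and follows the paper's own proof sketch step for step. Haar invariance makes $\tilde{\mathbf{k}}$ uniform on $\mathbb{S}^{D-1}$, the Gaussian representation $\mathbf{g}/\|\mathbf{g}\|_2$ turns $z_b$ into a ratio $\chi^2_m/(\chi^2_m+\chi^2_{D-m})$, and the cancellation $\mathbf{u}_b=\mathbf{g}_b/\|\mathbf{g}_b\|_2$ reduces the coordinate claim to a ratio $\chi^2_1/(\chi^2_1+\chi^2_{m-1})$. Your extra details (uniqueness of the invariant measure, $r_b>0$ almost surely, and radius--direction independence) fill in steps the paper's sketch leaves implicit.
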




\textbf{Implementation.} Both stages are realized with custom CUDA kernels: (i) a collision accumulation kernel and a small-range integer \textit{bucket\_topk} kernel for Top-$\beta$ selection, and (ii) a fused reranking kernel for RSQ-IP; details and kernel-level optimizations are deferred to Sec.~\ref{sec:impl_opt} in the Appendix. 
Additional design details (multi-tier collision weights, offline quantizer derivation, SRHT/Beta-prior justification) are provided in Appendix~\ref{app:coarse_to_fine_details}.

\subsubsection{On-Demand UVA Fetch}
Finally, the GPU fetches only the selected Top-$k$ KV pairs from CPU memory via UVA, and computes attention (B.3 in Fig.~\ref{fig:Framework_new}).
Thus, CPU memory serves as a backing store for capacity, while the retrieval decision path remains GPU-native.

\subsection{Implementation Optimizations and Complexity}
\label{sec:impl_opt}

ParisKV relies on a coarse-to-fine retrieval pipeline whose latency is dominated by GPU selection, reranking, and KV fetching.
We implement four custom CUDA kernels to eliminate sorting overhead, reduce kernel launches, and minimize memory traffic:
(i) bucket\_topk for integer collision scores, (ii) a parallel collision kernel,
(iii) a fused reranking kernel (gather+unpack+score), and (iv) a UVA-based kernel.

\textbf{Complexity.}
Let $|S|$ be the on-GPU steady/local attention length and $n=|R|$ the retrieval-zone length. Dense scoring over the full cache scales with $\mathcal{O}((|S|{+}n)D)$ per step. ParisKV keeps dense attention only on $|S|$ and replaces full-cache scoring with a coarse-to-fine pipeline on compact metadata: collision processing scales with $\rho n$, reranking scales with $C=\lceil\beta n\rceil$, and only $k\ll C$ KV vectors are fetched for attention. 

\section{Experimental Evaluation}


We evaluate ParisKV on long-generation reasoning (MATH500 \cite{hendrycksmath2021}, GPQA-Diamond \cite{rein2024gpqa}, AIME25 \cite{balunovic2025matharena}) and long-context understanding (LongBench-V2 \cite{bai2024longbench2}) across three model families (Qwen-3-8B \cite{qwen3_tech_report_2025}, DeepSeek-R1-Llama-8B \cite{guo2025deepseek}, and Qwen3-4B-Thinking-2507 \cite{qwen3_tech_report_2025}).
For fair comparison, we keep the KV-cache configuration in Table~\ref{tab:configs} consistent across methods. In all experiments, ParisKV uses a \emph{fixed} retrieval budget of top-$K{=}100$, while PQCache \cite{zhang2025pqcache} and MagicPIG \cite{chen2024magicpig} follow their paper-recommended \emph{budget policies} (PQCache uses $20\%$ compress ratio and MagicPIG uses a dynamic retrieval policy whose effective retrieval size varies with sequence length).
Full evaluation protocols, sampling settings, and workload construction details are in Appendix~\ref{app:experiment}.




\begin{table}[t]
\caption{Hyperparameter configurations across different tasks.}
\label{tab:configs}
\centering
\footnotesize
\setlength{\tabcolsep}{5pt}
\renewcommand{\arraystretch}{0.8}

\begin{tabularx}{\columnwidth}{@{}>{\raggedright\arraybackslash}Xcccc@{}}
\toprule
Dataset & Local & Update & Full-thres. & Max Gen \\
\midrule
AIME25        & 256 & 512 & 2K & 38.9K \\
MATH500       & 256 & 256 & 1K & 38.9K \\
GPQA-Diamond  & 128 & 512 & 2K & 32.8K \\
LongBench\_V2 & 256 & 512 & 2K & 1024+512 \\
RULER & 256 & 512 & 2K & 128 \\
\bottomrule
\end{tabularx}
\vspace*{-0.6cm}
\end{table}

\subsection{Accuracy Evaluation}
We evaluate ParisKV on both long-generation reasoning benchmarks and long-context understanding tasks.
In addition to the main comparisons with PQCache and MagicPIG, we evaluate against a broader set of KV-cache retrieval and sparse-attention baselines, including Quest, ShadowKV, FreeKV~\cite{liu2026freekv}, RetroInfer, SOCKET~\cite{joshi2026socket}, and Twilight~\cite{lin2025twilight} where applicable.
Across long-generation reasoning, LongBench-V2, and RULER~\cite{hsieh2024ruler}, ParisKV achieves the best overall accuracy among prior KV-cache retrieval/sparse-attention baselines, while often matching or approaching full attention.
The compact main results are shown in Tables~\ref{tab:aime25_gpqa_diamond} and~\ref{tab:longbenchv2}; full baseline comparisons are provided in Appendix Tables~\ref{tab:longbenchv2_full_baselines} and~\ref{tab:accuracy_comparison}.

\textbf{Long-generation tasks.}
Table~\ref{tab:aime25_gpqa_diamond} reports results on GPQA-Diamond (pass@1), MATH500 (pass@1), and AIME25 (pass@8) across three model families.
ParisKV consistently outperforms PQCache and MagicPIG, and in 7/9 settings it even \textbf{matches or exceeds} full-attention accuracy.

On \textit{Qwen-3-4B}, it achieves $72.22$ (GPQA-Diamond) and $92.80$ (MATH500), exceeding PQCache by +33.84 and +34.00 points.

\begin{table}[tb]
\footnotesize
\vskip 0.10in
\centering
\small
\renewcommand{\arraystretch}{0.8}
\setlength{\tabcolsep}{6pt}

\begin{tabular}{p{2.1cm}|c|c|c}
\toprule
\multirow{2}{*}{\textbf{Methods}}
& \textbf{GPQA\_dia} 
& \textbf{MATH500} 
& \textbf{AIME\_2025}  \\
&  (pass@1) & (pass@1) & (pass@8)\\
\midrule
\textit{Qwen-3-4B} & 64.14 & 88.60 & \textbf{86.67} \\
PQCache  & 38.38    & 58.80       & 3.33 \\
MagicPIG  & 32.32 & 46.40  & 6.67 \\
\rowcolor{green!50!black!20}
\textbf{ParisKV (Ours)} & \textbf{72.22} & \textbf{92.80}  & \textbf{80.00} \\
\midrule
\textit{DS-R1-Llam-8B} & 49.49 & 80.40 & 50.00 \\
PQCache   & 23.81    & 66.75        & 13.30 \\
MagicPIG  & 27.78 & 49.00  & 13.30 \\
\rowcolor{green!50!black!20}
\textbf{ParisKV (Ours)} 
& \textbf{57.07} & \textbf{84.40}  & \textbf{53.30} \\
\midrule
\textit{Qwen-3-8B} & 54.54 & 87.40 & \textbf{83.33} \\
PQCache                  
& 47.85    & 69.21       & 16.67 \\
MagicPIG                 
& 33.84 & 45.80  & 10.00 \\
\rowcolor{green!50!black!20}
\textbf{ParisKV (Ours)} 
& \textbf{55.05} & \textbf{93.00}  & \textbf{73.33} \\
\bottomrule
\end{tabular}

\caption{Accuracy on long-generation tasks: GPQA\_diamond (pass@1), MATH500 and AIME\_2025 (pass@8).}
\label{tab:aime25_gpqa_diamond}
\vskip -0.08in
        \vspace*{-0.6cm}
\end{table}

\textbf{Long-input tasks.}
We evaluate long-context understanding on LongBench-V2 (w/ CoT; Appendix~\ref{app:experiment_setup}).
ParisKV consistently improves accuracy over PQCache and MagicPIG, with the strongest gains in the Medium/Long buckets.
For example, on DeepSeek-R1-Llama-8B, ParisKV increases the overall score to $28.43$ (vs.\ $19.90$ for PQCache and $13.92$ for MagicPIG) and achieves $31.11$ (Easy) / $30.16$ (Hard) on Long inputs (Table~\ref{tab:longbenchv2}).


\textbf{Summary.} 
ParisKV improves accuracy in both \emph{long-generation} reasoning and \emph{long-input} understanding settings.
Compared to long-generation benchmarks, long-input tasks are typically less prone to retrieval drift because decoding produces far fewer tokens than the prefill length, leaving fewer opportunities for approximation errors to accumulate.
Even in this easier regime, ParisKV still consistently outperforms prior KV-cache retrieval baselines on \textsc{LongBench-V2}, including in the Medium/Long buckets.
The gains become substantially larger in long-generation reasoning, where drift can compound over tens of thousands of decoding steps: on AIME25, ParisKV improves pass@8 by +$40.0$ to +$76.7$ points over PQCache/MagicPIG across model families (Table~\ref{tab:aime25_gpqa_diamond}).
Overall, these results indicate that ParisKV is robust across long-context regimes, and is especially effective when the decoding horizon is long and retrieval errors would otherwise accumulate.

\subsection{Efficiency Evaluation}

\begin{figure*}[!t]
\centering
\captionsetup{font=small,skip=3pt}

\begin{minipage}[b]{0.30\textwidth}
\centering
\includegraphics[width=0.96\linewidth]{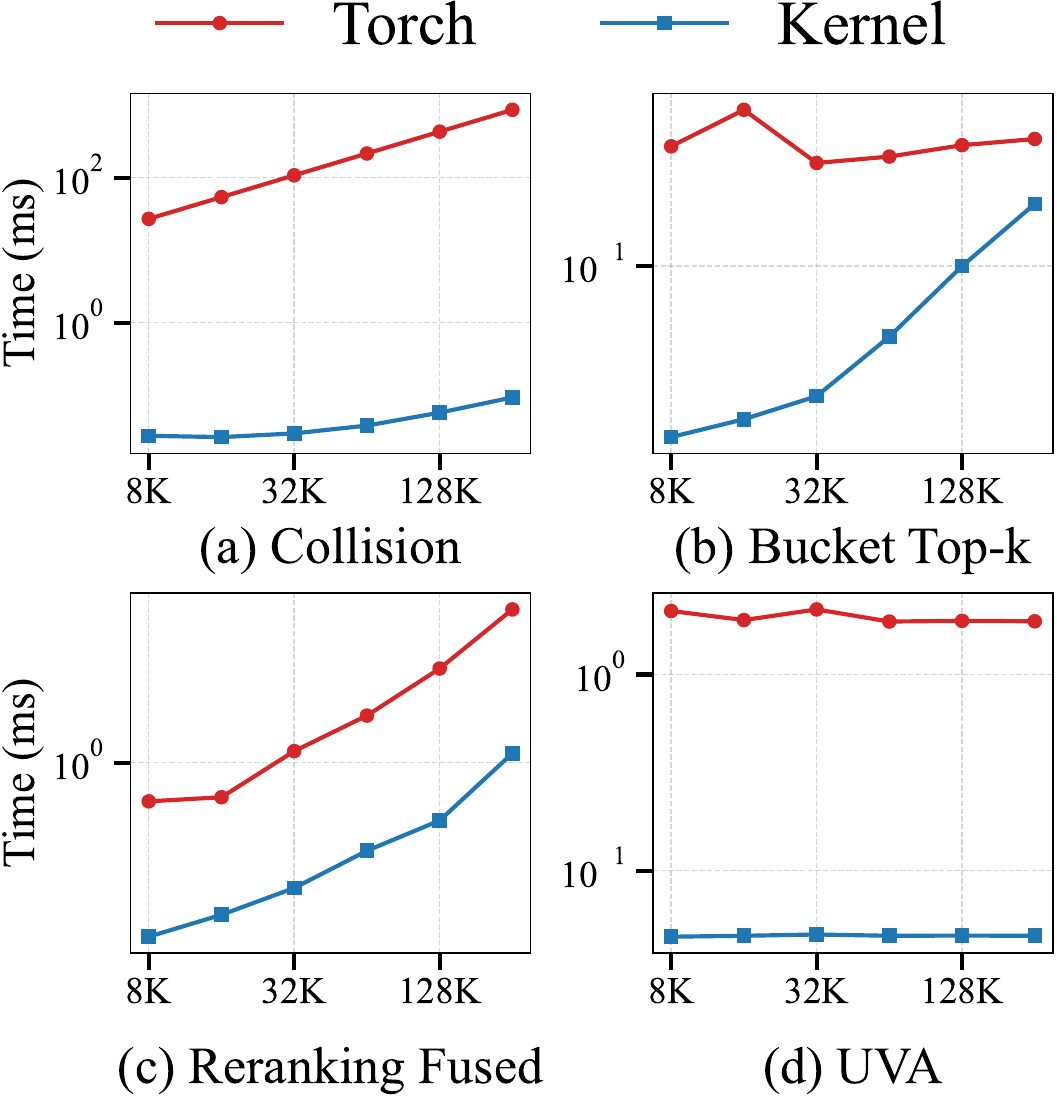}
\vspace{0.02cm}
\caption{Comparison of runtime between Torch and custom kernels.}
\label{fig:torch_vs_kernel}
\end{minipage}
\hfill
\begin{minipage}[b]{0.67\textwidth}
\centering
\small
\renewcommand{\arraystretch}{0.80}
\setlength{\tabcolsep}{4.5pt}

\begin{tabular}{p{2.15cm}ccccccc}
\toprule
\textbf{Methods} & \textbf{Overall} &
\multicolumn{2}{c}{\textbf{Short}} &
\multicolumn{2}{c}{\textbf{Medium}} &
\multicolumn{2}{c}{\textbf{Long}} \\
\cmidrule(lr){3-4} \cmidrule(lr){5-6} \cmidrule(lr){7-8}
& & \textbf{Easy} & \textbf{Hard} & \textbf{Easy} & \textbf{Hard} & \textbf{Easy} & \textbf{Hard} \\
\midrule

{\textit{Qwen3-4B}}
& \textbf{25.84} & 27.12 & 16.53 & 36.36 & 25.20 & 26.67 & 28.57 \\
PQCache & 17.91 & 16.95 & 19.00 & 13.60 & 19.00 & 20.00 & 19.05 \\
MagicPIG & 16.70 & 18.64 & 10.74 & 14.77 & 20.47 & 28.89 & 12.70 \\
\textbf{ParisKV (Ours)} & 24.60 & 35.59 & 19.49 & 26.14 & 22.05 & 28.89 & 23.81 \\
\midrule

{\textit{Qwen-3-8B}}
& \textbf{33.59} & 50.85 & 34.71 & 32.95 & 25.98 & 37.78 & 28.57 \\
PQCache & 25.50 & 23.70 & 31.60 & 28.20 & 24.00 & 25.00 & 20.00 \\
MagicPIG & 10.34 & 8.47 & 15.70 & 7.95 & 7.87 & 13.33 & 7.94 \\
\textbf{ParisKV (Ours)} & 33.07 & 52.54 & 34.71 & 34.09 & 26.77 & 37.21 & 19.67 \\
\midrule

{\textit{DS-R1-Llam-8B}}
& 13.12 & 18.64 & 15.70 & 12.50 & 8.66 & 11.11 & 14.29 \\
PQCache & 19.90 & 18.60 & 21.50 & 21.60 & 22.20 & 15.60 & 14.30 \\
MagicPIG & 13.92 & 15.25 & 11.57 & 11.36 & 14.17 & 17.78 & 17.46 \\
\rowcolor{blue!50!black!30}
\textbf{ParisKV (Ours)} & \textbf{28.43} & 37.29 & 25.62 & 28.41 & 25.20 & 31.11 & 30.16 \\
\bottomrule
\end{tabular}

\captionof{table}{LongBench\_V2 accuracy breakdown by context length (Short/Medium/Long) and difficulty (Easy/Hard).}
\label{tab:longbenchv2}

\end{minipage}

\end{figure*}

We evaluate ParisKV's decoding efficiency under varying context lengths and batch sizes, compare end-to-end latency with representative KV-cache retrieval systems, and report the overhead of the prefill stage. 
Since different baselines define and allocate their KV-cache budgets differently, we follow each released implementation's default budget design whenever applicable. 
For accuracy evaluation, we set the budgets of other baselines to be no smaller than that of ParisKV whenever possible, ensuring that ParisKV is not favored by using a larger retrieval budget. 
Detailed experimental setup, workloads, metrics, and baseline configurations are provided in Appendix~\ref{app:experiment_setup}.

\textbf{Summary}. 
 ParisKV achieves strong efficiency and scalability for long-context inference. Details  of throughput/TPOT/prefill latency, kernel optimizations, and implementation choices are provided in Appendix~\ref{app:experiment_efficiency}.

(1) Better batch scaling. Across 64K/128K/256K contexts, ParisKV improves the peak decoding throughput over full attention by 2.1$\times \sim$ 2.8$\times$
 on both Llama3.1-8B and Qwen3-8B (up to $2.7\times$ on Llama3.1-8B and $2.8\times$ on Qwen3-8B). 
 Moreover, ParisKV sustains significantly larger runnable batch sizes where full attention becomes memory-bounded (e.g., at 128K full attention OOMs at bs$\geq$$4$ while ParisKV scales to bs=$8$; at 256K full attention OOMs at bs$\geq$$2$ while ParisKV scales to bs=$5$).
 
(2) Stable decoding latency. 
ParisKV exhibits smooth TPOT scaling with batch size. 
For example, on Qwen3-8B at 128K, TPOT increases from $24.32$ms/step (bs=$1$) to $58.92$ms/step (bs=$8$), reducing per-token latency from $24.32$ms/token to $7.37$ ms/token due to amortization.

(3) Superior performance at extremely long contexts. 
When full attention becomes infeasible (OOM at 384K even with bs=$1$), ParisKV remains runnable and scales up to bs=$3$. Under 128K$\sim$1024K (bs=$1$), ParisKV substantially outperforms representative KV-retrieval baselines. 
On Llama3.1-8B, ParisKV achieves up to $44.4\times$ lower decode latency than PQCache (e.g., $49$ms/step vs. $2179$ms/step at 1024K), and also significantly outperforms MagicPIG with up to $16.9\times$ lower decode latency at 1024K ($49$ms/step vs. $830$ms/step). 
Similar trends hold for Qwen3-8B, where ParisKV consistently maintains decoding latency in the tens-of-milliseconds range while MagicPIG and PQCache incur substantially higher per-step overheads at long contexts.

\begin{figure*}[!t]
\captionsetup{font=footnotesize,skip=1pt}
    \begin{minipage}[b]{0.63\textwidth}
        \begin{minipage}[b]{\textwidth}
        \vspace*{0.05cm}
        \centering
        \includegraphics[width=0.94\textwidth]{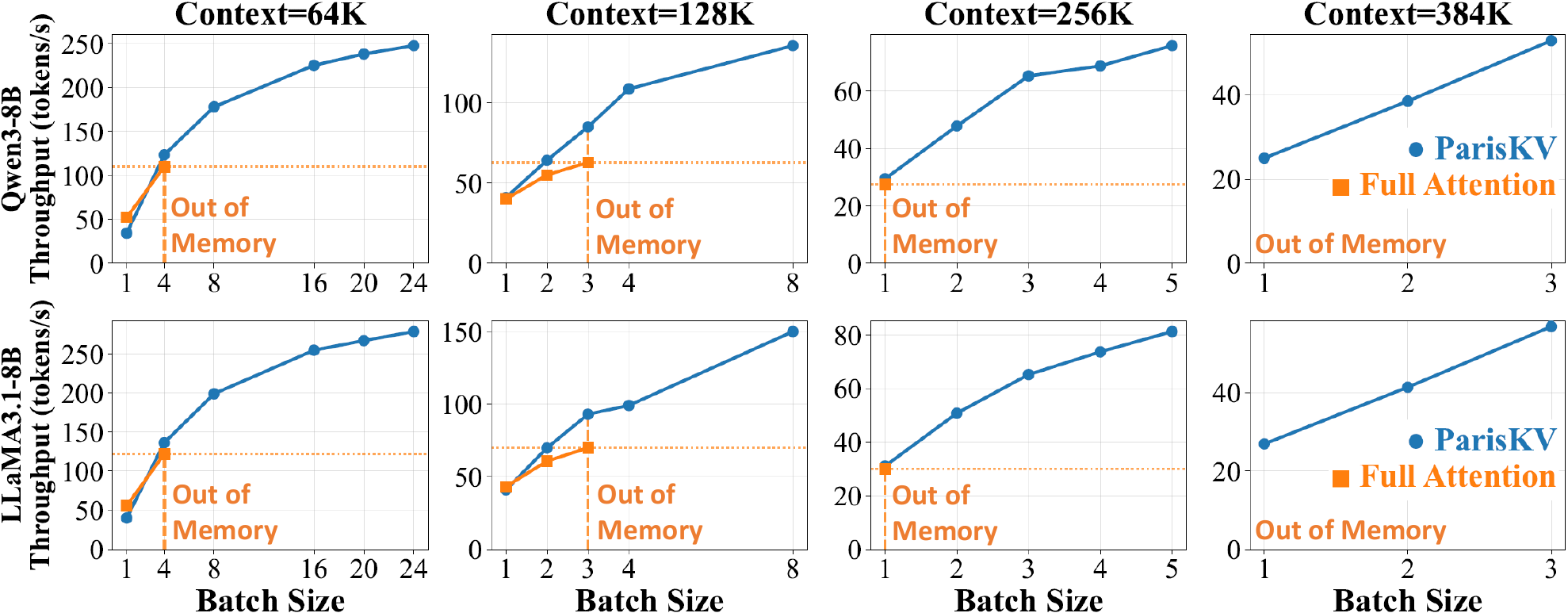}
        \caption{Longbench V2 Decoding throughput vs. Context Length and Batch Size}
        \label{fig:Longbench V2 Decoding throughput}
            \end{minipage}
    \begin{minipage}[b]{0.50\textwidth}
        \vspace*{0.05cm}
        \centering
        \includegraphics[width=\linewidth]{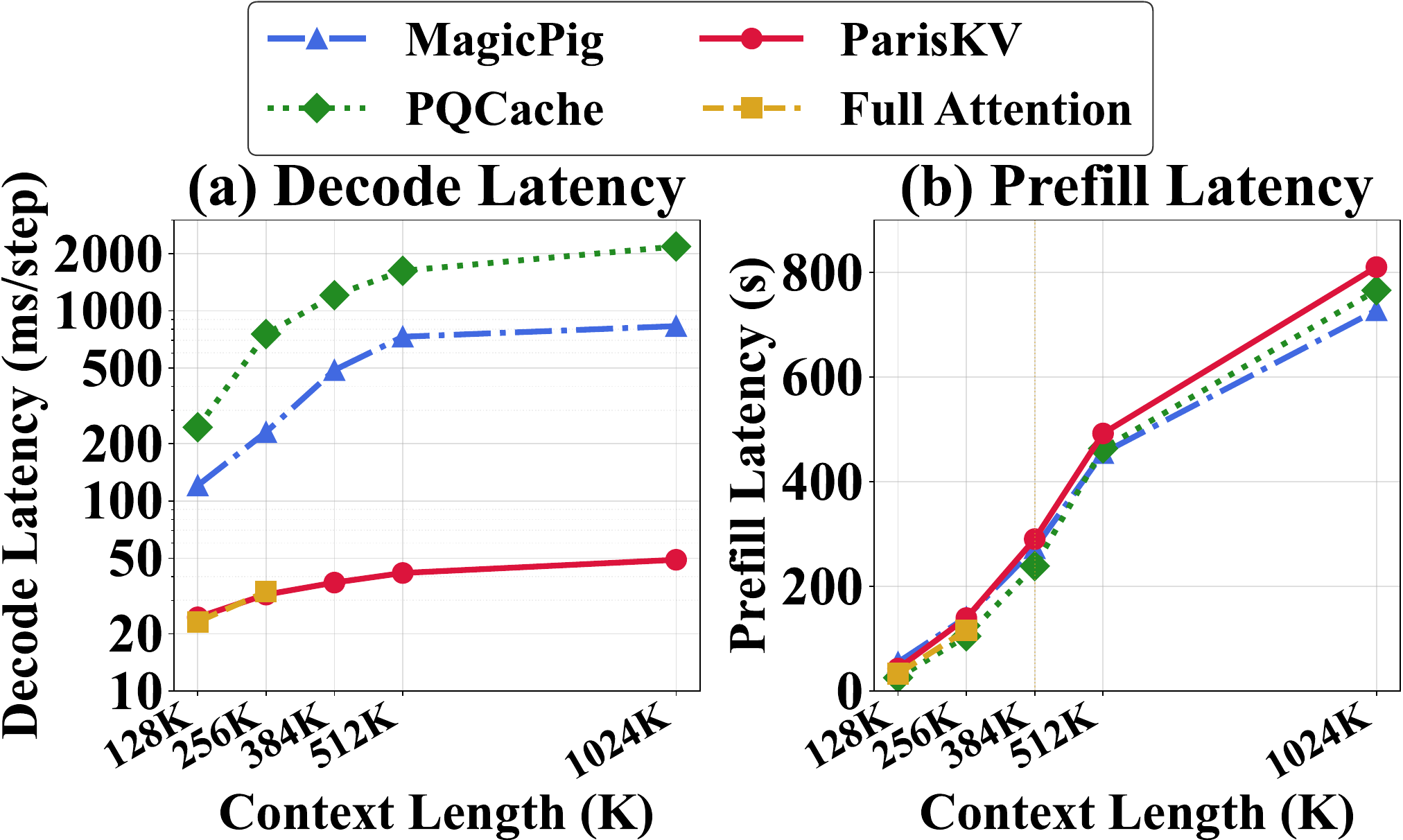}
        \caption{Prefill and Decode Time.}
        \label{fig:latency_comparison_combined}
                    \end{minipage}%
                \begin{minipage}[b]{0.50\textwidth}
                  \centering
  \includegraphics[width=\textwidth]{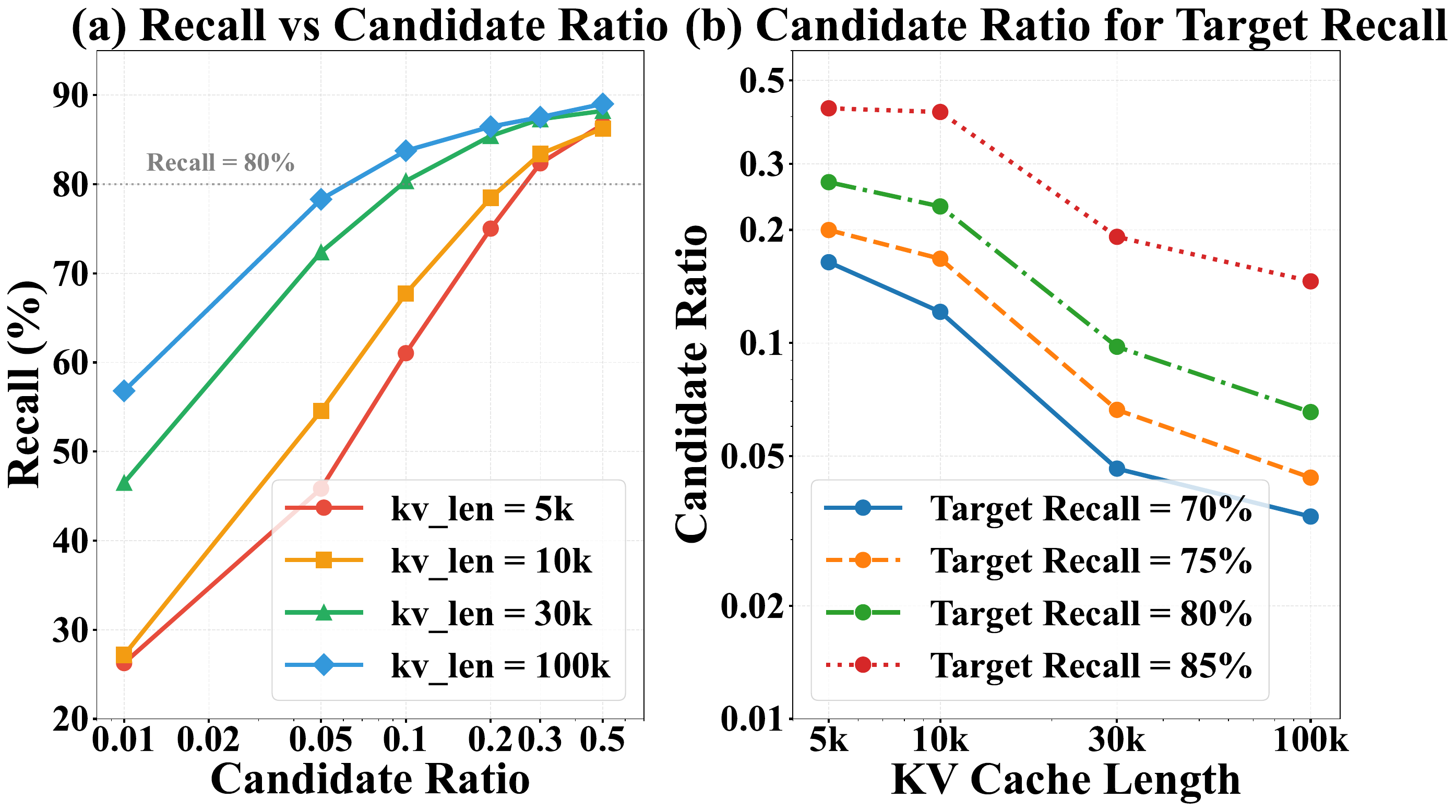}
  \caption{Impact of KV cache length on candidate-ratio requirements.}
  \label{fig:candidate_ratio_pareto}
            \end{minipage}

    \end{minipage}
    \hfill
    \begin{minipage}[b]{0.34\textwidth}
  \centering
  \includegraphics[width=0.94\linewidth]{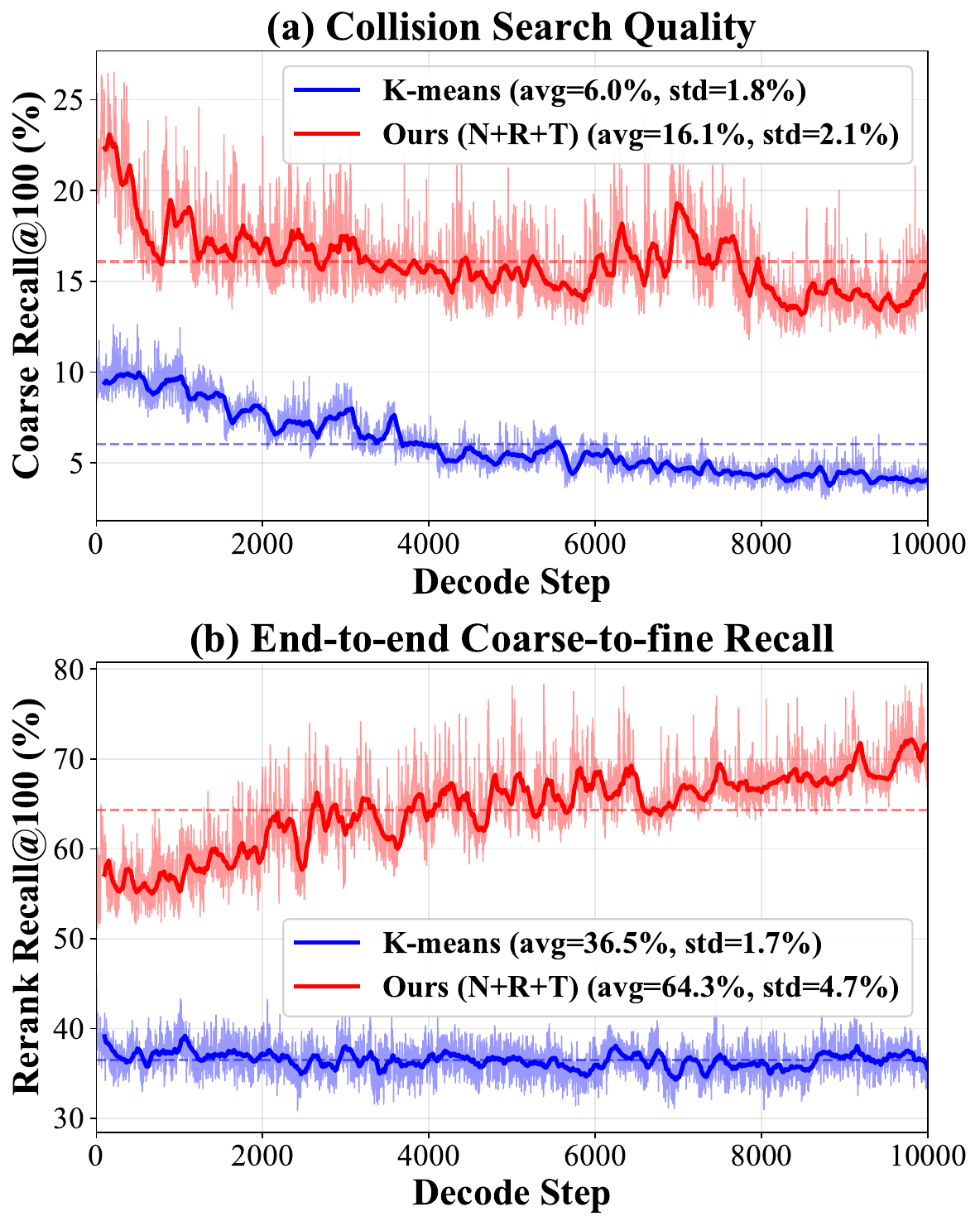}
  \caption{Retrieval quality comparisons. (a) Coarse-stage collision quality. (b) End-to-end recall. Ours (N+R+T) denotes normalization + rotation + theoretical centroids.}
  \label{fig:recall_comparison}
    \end{minipage}
    \vspace*{-0.2cm}

\end{figure*}

\subsection{Ablation Studies}


\textbf{Drift mitigation with normalization/rotation + theoretical spherical centroids.}
This design substantially improves coarse-stage candidate generation under long decoding and translates into large end-to-end gains:
coarse Recall@100 improves from $6\%$ to $16.1\%$, and final Recall@100 after exact reranking increases from $36.5\%$ to $64.3\%$ (Fig.~\ref{fig:recall_comparison}).

\section{Conclusions}

We introduced ParisKV, a drift-robust KV-cache retrieval framework that addresses the fundamental challenges of distribution drift and retrieval latency in long-context LLM inference. Our approach is grounded in a key insight: by transforming keys and queries onto a unit hypersphere via normalization and random orthogonal rotation, we can employ data-independent analytic centroids that remain stable regardless of how the key distribution evolves during decoding. This eliminates the centroid staleness that plagues existing methods and maintains high retrieval recall throughout extended generation.

ParisKV employs a GPU-native coarse-to-fine pipeline combining collision-based pruning with calibrated 4-bit reranking, and leverages UVA-based offloading for efficient million-token inference with minimal CPU intervention.

Empirically, ParisKV demonstrates substantial improvements over prior KV-cache retrieval methods. On long-generation reasoning tasks, it improves pass@8 accuracy on AIME25 by +40 to +77 points over PQCache and MagicPIG, while matching or exceeding full attention quality in 7 out of 9 settings. The efficiency gains are equally significant: ParisKV achieves up to $2.8\times$ higher decoding throughput than full attention within its runnable range, and reduces decode latency by $17\times$ and $44\times$ compared to MagicPIG and PQCache at million-token scale.

Our results show that carefully designed KV-cache retrieval can simultaneously improve both efficiency and accuracy for long-context inference, even in the presence of drift, opening new possibilities for deploying LLMs in latency-sensitive, long-generation applications.


\section*{Acknowledgements}
Supported by EU Horizon project DataGEMS ($101188416$), and by $Y \Pi AI
\Theta A$ \& NextGenerationEU project HARSH ($Y\Pi 3TA-0560901$) that is
carried out within the framework of the National Recovery and Resilience
Plan ``Greece 2.0'' with funding from the European Union --
NextGenerationEU and  National Natural Science Foundation of China (62202450). This work was granted access to the HPC resources of
IDRIS under the allocation 2025-A0191012641 made by GENCI.

\section*{Impact Statement}
This paper presents work whose goal is to advance efficient long-context inference for large language models. 
There are many potential societal consequences of our work, none of which we feel must be specifically highlighted here.


\bibliography{example_paper}
\bibliographystyle{icml2026}

\newpage
\appendix
\onecolumn

\section{Additional Related Work}\label{app:rw_quant_ann}

Beyond inference-side KV-cache efficiency, data-centric approaches improve foundation-model training and adaptation efficiency through complementary directions such as synthetic-data pretraining and instruction-data selection~\cite{xie2026caukerclassificationtimeseries,lin2025lead, xie2025visjudge}.

\subsection{KV-Cache Quantization}

Several recent studies have proposed innovative methods to tackle the challenges of KVCaches quantization while keeping performance degradation minimal.

SmoothQuant\cite{xiao2023smoothquant} presents an offline approach that migrates the difficulty of quantization to mitigate its associated performance drop, which allows for quantizing the KVCaches to 8-bit precision. This method effectively balances the trade-off between saving memory and preserving model accuracy. Q-Hitter\cite{zhang2024q} further reduces the KVCaches size by utilizing accumulated attention scores and a “Quantization Friendliness” metric. This method effectively combines sparsity and 8-bit quantization, achieving substantial memory savings while maintaining model performance. KIVI \cite{liu2024kivi} examines the distribution patterns of key and value tokens and introduces an innovative quantization scheme: applying per-token quantization to the key cache and per-channel quantization to the value cache. This method enables an unprecedented 2-bit compression of the KVCaches. QuaRot \cite{ashkboos2024quarot} employs a randomized Hadamard transform to induce rotations that suppress outliers, thereby enhancing the model's compatibility with quantization. QuQuant \cite{lin2024duquant} detects significant outliers and redistributes them across multiple channels via a series of transformations, which simplifies the quantization process. SpinQuant \cite{liu2024spinquant} integrates learnable rotation matrices to optimize the overall network accuracy. KVTuner \cite{li2025kvtuner} uses offline calibration to determine layer-wise mixed-precision KVCaches configurations, which are directly applied during inference without online adaptation.

\subsection{Vector Search}
Vector Approximate Nearest Neighbor search(ANNS) has become a standard building block for KV-Cache Retrieval in large language model inference. 
By efficiently identifying a small subset of highly relevant tokens, 
ANNS significantly reduces both memory traffic and computational cost.

Formally, let the KV cache at some layer be a set of key vectors
\[
\mathcal{X} = \{x_1, x_2, \dots, x_N\} \subset \mathbb{R}^d,
\]
and let the query vector be
\[
q \in \mathbb{R}^d.
\]
For a similarity function \(s(\cdot,\cdot)\) (e.g., inner product or
negative distance), the \emph{exact nearest neighbor} problem is
\[
\operatorname{NN}(q) = \arg\max_{x \in \mathcal{X}} s(q, x).
\]
The \emph{vector ANN problem} relaxes this to finding
\(\tilde{x} \in \mathcal{X}\) such that
\[
s(q, \tilde{x}) \ge (1 - \varepsilon)\, \max_{x \in \mathcal{X}} s(q, x),
\]
for some approximation factor \(\varepsilon > 0\); more practically, one
often aims to find a top-\(k\) subset whose recall with respect to the
exact top-\(k\) set is high, while using much less time and memory than
exhaustive search.

A rich family of ANNS systems has been developed to support fast, high-accuracy retrieval, 
including graph-based methods such as HNSW~\cite{malkov2018efficient}, 
tree- and partition-based methods such as DB-LSH~\cite{tian2023db}, PM-LSH~\cite{zheng2020pm}, DET-LSH~\cite{wei2024det,wei2026pdet}, MESSI~\cite{peng2020messi, peng2021fast}, and SOFA~\cite{schafer2025fast}, 
quantization-based methods such as LVQ~\cite{aguerrebere2023similarity}, SAQ~\cite{li2025saq}, RabitQ~\cite{RaBitQ2024Gao} and SuCo~\cite{wei2025subspace}, 
and hierarchical methods such as LSH-APG~\cite{zhao2023towards}, SymphonyQG~\cite{gou2025symphonyqg} and ELPIS~\cite{azizi2023elpis}. These techniques form the algorithmic foundation of modern KV-cache retrieval systems operating 
under strict latency and memory constraints while preserving attention accuracy.

\section{Additional Details of On-GPU Retrieval (Coarse-to-Fine)}\label{app:coarse_to_fine_details}

\paragraph{Setup.}
Let $\hat{\mathbf{k}}=\mathbf{k}/\|\mathbf{k}\|_2$ and $\hat{\mathbf{q}}=\mathbf{q}/\|\mathbf{q}\|_2$.
We apply an orthogonal rotation $\mathbf{R}$ (SRHT in implementation) and partition into $B$ contiguous subspaces:
$\tilde{\mathbf{k}}=\mathbf{R}\hat{\mathbf{k}}=[\tilde{\mathbf{k}}_{1};\ldots;\tilde{\mathbf{k}}_{B}]$,
$\tilde{\mathbf{q}}=\mathbf{R}\hat{\mathbf{q}}=[\tilde{\mathbf{q}}_{1};\ldots;\tilde{\mathbf{q}}_{B}]$,
where $\tilde{\mathbf{k}}_{b},\tilde{\mathbf{q}}_{b}\in\mathbb{R}^{m}$ and $D=Bm$.
In each subspace $b$, each key is assigned to a centroid $\mathbf{c}_{i,b}$  (learned on prefill stage).
At decoding time, we compute a cheap proxy score per centroid (e.g., $\tilde{\mathbf{q}}_b^\top \mathbf{c}$) and rank keys by their centroid score.

\subsection{Offline Quantizer Derivation from Rotation-Induced Priors}
\label{app:quantizer_derivation}

\subsubsection{Rotation-Induced Beta Priors}
\label{app:beta_priors}

\begin{proposition}[Rotation-induced Beta priors for subspaces]\label{prop:beta_priors}
Let $\hat{\mathbf{k}}\in \mathbb{S}^{D-1}$ be any unit vector and $\mathbf{R}\in\mathbb{R}^{D\times D}$ be Haar-random orthogonal.
Let $\tilde{\mathbf{k}}=\mathbf{R}\hat{\mathbf{k}}$ and partition it into $B$ contiguous subspaces of dimension $m$ ($D=Bm$):
$\tilde{\mathbf{k}}=[\tilde{\mathbf{k}}_{1};\ldots;\tilde{\mathbf{k}}_{B}]$.
Define $r_{b}=\|\tilde{\mathbf{k}}_{b}\|_2$, $z_{b}=r_{b}^{2}\in[0,1]$, and
$\mathbf{u}_{b}=\tilde{\mathbf{k}}_{b}/r_{b}\in \mathbb{S}^{m-1}$.
Then for any fixed $b\in[B]$ and $j\in[m]$:
\begin{align}
z_{b} &\sim \mathrm{Beta}\!\left(\tfrac{m}{2},\tfrac{D-m}{2}\right), \label{eq:radius_beta_prop}\\
(\mathbf{u}_{b})_{j}^{2} &\sim \mathrm{Beta}\!\left(\tfrac{1}{2},\tfrac{m-1}{2}\right). \label{eq:coord_beta_prop_app}
\end{align}
\end{proposition}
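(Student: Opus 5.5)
The plan is to reduce everything to the uniform distribution on the sphere, using its Gaussian representation. Since $\mathbf{R}$ is Haar-distributed on $O(D)$ and $\hat{\mathbf{k}}$ is a fixed unit vector, $\tilde{\mathbf{k}}=\mathbf{R}\hat{\mathbf{k}}$ is uniform on $\mathbb{S}^{D-1}$. This follows from left-invariance of Haar measure: for any fixed orthogonal $\mathbf{Q}$, $\mathbf{Q}\mathbf{R}\stackrel{d}{=}\mathbf{R}$, so the law of $\tilde{\mathbf{k}}$ is rotation-invariant. The uniform measure is the unique rotation-invariant probability measure on the sphere. We then represent $\tilde{\mathbf{k}}\stackrel{d}{=}\mathbf{g}/\|\mathbf{g}\|_2$ with $\mathbf{g}\sim\mathcal{N}(0,\mathbf{I}_D)$. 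This is justified because $\mathbf{g}/\|\mathbf{g}\|_2$ is also rotation-invariant, by the invariance of the standard Gaussian. We split $\mathbf{g}=[\mathbf{g}_1;\ldots;\mathbf{g}_B]$ conformally with the subspace partition.

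For the radius claim \eqref{eq:radius_beta_prop}, we write $z_b=\|\mathbf{g}_b\|^2/(\|\mathbf{g}_b\|^2+\|\mathbf{g}_{-b}\|^2)$, where $\mathbf{g}_{-b}$ collects the other $D-m$ coordinates. The two squared norms are independent, with $X=\|\mathbf{g}_b\|^2\sim\chi^2_m=\mathrm{Gamma}(m/2,2)$ and $Y=\|\mathbf{g}_{-b}\|^2\sim\mathrm{Gamma}((D-m)/2,2)$. We then invoke the standard Gamma--Beta relation: for independent $X\sim\mathrm{Gamma}(a,\theta)$ and $Y\sim\mathrm{Gamma}(c,\theta)$, $X/(X+Y)\sim\mathrm{Beta}(a,c)$. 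This relation comes from the change of variables $(X,Y)\mapsto(X/(X+Y),X+Y)$, whose Jacobian factorizes the joint density. Applying it gives $z_b\sim\mathrm{Beta}(m/2,(D-m)/2)$.

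For the direction claim \eqref{eq:coord_beta_prop_app}, note that $\mathbf{u}_b=\tilde{\mathbf{k}}_b/r_b=\mathbf{g}_b/\|\mathbf{g}_b\|_2$, because the common factor $\|\mathbf{g}\|_2$ cancels. Hence $\mathbf{u}_b$ is uniform on $\mathbb{S}^{m-1}$, since it is the normalization of the standard Gaussian $\mathbf{g}_b$ in $\mathbb{R}^m$. This is defined almost surely, as $r_b>0$ with probability one. Then $(\mathbf{u}_b)_j^2=g_{b,j}^2/(g_{b,j}^2+\sum_{l\ne j}g_{b,l}^2)$, which is a ratio of independent $\chi^2_1$ and $\chi^2_{m-1}$ variables. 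The same Gamma--Beta lemma gives $\mathrm{Beta}(1/2,(m-1)/2)$.

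The main obstacle is the rigorous justification of the first step: that Haar rotation of a fixed vector yields the uniform law, and that this law coincides with the normalized Gaussian. I would handle this through the invariance-plus-uniqueness argument above rather than by any explicit computation. Everything after that is the standard Gamma--Beta change of variables, which I would cite or verify in one line via the Jacobian. As a side remark, the SRHT used in the implementation is only an approximation to a Haar rotation, so the proposition is stated for Haar $\mathbf{R}$.
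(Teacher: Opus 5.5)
Your proposal is correct and follows essentially the same route as the paper's proof sketch: Haar invariance makes $\tilde{\mathbf{k}}$ uniform on $\mathbb{S}^{D-1}$, the representation $\tilde{\mathbf{k}}\stackrel{d}{=}\mathbf{g}/\|\mathbf{g}\|_2$ turns $z_b$ and $(\mathbf{u}_b)_j^2$ into ratios of independent chi-square variables, and the Gamma--Beta relation gives both claims. You supply details the paper leaves implicit (the invariance-plus-uniqueness argument, the cancellation $\mathbf{u}_b=\mathbf{g}_b/\|\mathbf{g}_b\|_2$, and almost-sure well-definedness); like the paper, you tacitly need $B\ge 2$ and $m\ge 2$ so that the Beta parameters are positive.
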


\noindent\textit{Proof sketch.}
Since $\mathbf{R}$ is Haar-random, $\tilde{\mathbf{k}}$ is uniform on $\mathbb{S}^{D-1}$.
Using $\tilde{\mathbf{k}}\stackrel{d}{=}\mathbf{g}/\|\mathbf{g}\|_2$ with $\mathbf{g}\sim\mathcal{N}(0,\mathbf{I}_D)$,
each block energy ratio $z_b=\|\mathbf{g}_b\|_2^2/\|\mathbf{g}\|_2^2$ becomes a ratio of independent chi-square variables,
yielding~\eqref{eq:radius_beta_prop}.
Moreover, $\mathbf{u}_b=\mathbf{g}_b/\|\mathbf{g}_b\|_2$ is uniform on $\mathbb{S}^{m-1}$ and $(\mathbf{u}_b)_j^2$ is again a chi-square ratio,
yielding~\eqref{eq:coord_beta_prop_app}.

\paragraph{Remark (SRHT vs.\ Haar).}
Our analysis assumes Haar-random $\mathbf{R}$.
In practice, we implement $\mathbf{R}$ using SRHT, a fast randomized orthogonal transform that empirically induces near-isotropic coordinate statistics.
We therefore reuse the same analytic priors as a stable approximation under this near-isotropy assumption.

\subsubsection{Coordinate-Magnitude Quantizer for 4-Bit Direction Codes}
\label{app:coord_lloydmax}
RSQ-IP quantizes $X = |(\mathbf{u}_b)_j|$ (with an additional sign bit).
From~\eqref{eq:coord_beta_prop}, $Y=(\mathbf{u}_b)_j^2 \sim \mathrm{Beta}\!\big(\tfrac{1}{2},\tfrac{m-1}{2}\big)$, hence
$X=\sqrt{Y}\in[0,1]$ has an analytic, data-independent target distribution.
We compute shared 3-bit thresholds $\boldsymbol{\tau}=(\tau_1,\ldots,\tau_7)$ and reconstruction levels $a[0],\ldots,a[7]$
by running Lloyd--Max scalar quantization on the density of $X$ (offline, once), and share them across all layers/subspaces.

\textbf{Lloyd--Max updates.}
Given thresholds, reconstruction values are conditional means
$a[t] \leftarrow \mathbb{E}[X \mid \tau_t \le X < \tau_{t+1}]$,
and given reconstruction values, thresholds are midpoints
$\tau_t \leftarrow (a[t-1]+a[t])/2$ for $t=1,\ldots,7$.
We iterate until convergence (offline).
Because the target distribution depends only on $m$, this quantizer is data-independent and stable under decoding drift.

\subsubsection{(Optional) Radius/Energy Quantization}
\label{app:radius_lloydmax}
Rotation and partitioning also yield a stable prior for subspace energy.
Define $z_{i,b}\triangleq r_{i,b}^{2}=\|\tilde{\mathbf{k}}_{i,b}\|_2^2\in[0,1]$.
From~\eqref{eq:radius_beta_prop}, $z_{i,b}\sim \mathrm{Beta}\!\big(\tfrac{m}{2},\tfrac{D-m}{2}\big)$, which enables data-independent
radius centers via Lloyd--Max quantization of $z$ (followed by square-roots for $r$).

In our final system, we do \emph{not} quantize radii for reranking: we retain exact $r_{i,b}$ and absorb it into the precomputed weight
$w_{i,b}=\|\mathbf{k}_i\|_2\, r_{i,b}/\alpha_{i,b}$.
We also set the number of radius levels to $K_r{=}1$ in coarse retrieval because finer radius binning provides marginal recall gains
but increases memory and update overhead.

\subsection{\textbf{On-GPU Retrieval (Coarse-to-Fine)}.}
At each decoding step, ParisKV performs a two-stage GPU pipeline:
\emph{Stage~I} uses inexpensive \emph{direction-only} collisions across subspaces to prune the search space to a small candidate set
(typically top-$\beta$ fraction), and
\emph{Stage~II} reranks candidates with RSQ-IP by estimating the raw attention score
$\langle \mathbf{k},\mathbf{q}\rangle$ from the compact 4-bit summaries (no full-precision keys needed during reranking).
Both stages are implemented with custom CUDA kernels (bucket top-k kernel, collision kernel, reranking fused kernel, UVA kernel), which we detail in Section~\ref{sec:impl_opt}.

\subsubsection{Coarse Candidate Generation (Collision Ratio and Multi-Tier Weights).} \label{app:B.2.1_collision}
We aim to quickly prune the full key set to a small candidate set while remaining robust to decoding drift.
ParisKV forms a coarse score by accumulating subspace-wise \emph{collisions}.
In each subspace, we rank keys by a cheap similarity between the query and the key's assigned centroid (e.g., $q^\top c$), and only the top-$\rho$ fraction participates in scoring; keys outside this top-$\rho$ set receive zero bonus in that subspace.
Within the collided region, we apply \emph{multi-tier} bonuses rather than a uniform $+1$.

A binary (0/1) collision yields a low-resolution total score (e.g., $[0,B]$ for $B{=}16$), so many keys end up with identical accumulated counts.
This coarseness is problematic for two reasons: (i) it cannot distinguish keys that are highly aligned with the query in some subspaces but weakly aligned in others, weakening the ranking signal from multi-subspace accumulation; and (ii) it makes Top-$\beta$ truncation unstable, since the cutoff often contains a large group of keys with the same score, forcing the candidate set to exceed the intended budget and causing candidate sizes to vary across heads/steps.

\textbf{Tier definition.}
Within the top-$\rho$ fraction in each subspace, we further split keys into $L$ percentile tiers and assign tier weights.
In our implementation, we use $L=6$ tiers with weights and cutoffs:
\[
\text{weights } \{6,5,4,3,2,1\},\quad
\text{percentiles } \{5\%,15\%,30\%,50\%,75\%,100\%\}
\ \text{(within top-$\rho$)}.
\]
Thus each subspace contributes an integer bonus in $\{0,1,\ldots,6\}$ and the summed score range becomes $[0,6B]$
(e.g., $[0,96]$ when $B{=}16$), yielding a sharper and more stable Top-$\beta$ cutoff.

\textbf{Coarse score.}
Let $T_{b}(\rho)$ be the set of keys in the top-$\rho$ fraction under centroid proxy in subspace $b$.
Let $\ell(i,b)\in\{1,\ldots,L\}$ denote the tier index for key $i$ in subspace $b$ (defined only if $i\in T_b(\rho)$).
The collision score is
\begin{equation} \small
S_i \;\triangleq\; \sum_{b=1}^{B} \mathbb{I}[i\in T_b(\rho)]\cdot w_{\ell(i,b)},
\label{eq:collision_score}
\end{equation}
where $w_{\ell}$ is the tier weight.

After summing across $B$ subspaces, we select the top-$\beta$ fraction of keys as candidates (typically $\beta{=}$5\%--10\%).
In practice, we choose $\rho \ge \beta$ so that each subspace scores at least as many keys as the final candidate budget.
Otherwise, too few keys receive non-zero bonuses in that subspace, and the accumulated collision score becomes overly sparse, weakening the ability to reliably identify the Top-$\beta$ candidates.
and ensure the candidate set is comfortably larger than the final Top-$k$ (e.g., $k{=}100$) used in Stage~II.
Since $k$ is fixed, the optimal $\beta$ depends strongly on the KV length: shorter KV often requires a larger $\beta$ to ensure enough candidates can cover the true Top-$k$, whereas longer KV allows a smaller $\beta$ to reduce reranking cost.


\textbf{Top-$\beta$ selection with small-range integer scores}\label{app:bucket_topk}
We implement collision accumulation with a dedicated \textit{Collision Kernel}, and use a specialized \textit{bucket\_topk} kernel to select Top-$\beta$ candidates from small-range integer scores without sorting.
We avoid sorting all $N$ keys by exploiting that $S_i$ is small-range integer.
We use a histogram-and-prefix-scan \texttt{bucket\_topk} kernel:
(i) build a histogram of scores, (ii) find the threshold score yielding $\lceil \beta N\rceil$ items,
(iii) compact indices whose scores exceed the threshold; ties at the threshold are handled by deterministic truncation.
This yields predictable candidate sizes and avoids expensive global sort.

\subsubsection{RSQ-IP Reranking from 4-Bit Key Summaries}\label{app:B.2.2_rsqip}

\textbf{Goal.}
Given Stage~I candidates, RSQ-IP reranks them by estimating the raw attention score
$\langle \mathbf{k}_i,\mathbf{q}\rangle$ directly from compact 4-bit key summaries, since full-precision keys are offloaded
and unavailable during reranking.
It combines (i) data-independent low-bit quantization with theoretically derived levels and (ii) an alignment correction to reduce quantization bias.

\textbf{Notation (radius--direction decomposition).}
After normalization and rotation, we partition each key into $B$ subspaces.
For key $i$ and subspace $b$, define the subspace radius and unit direction
$r_{i,b}\triangleq \|\tilde{\mathbf{k}}_{i,b}\|_2$ and $\mathbf{u}_{i,b}\triangleq \tilde{\mathbf{k}}_{i,b}/r_{i,b}\in\mathbb{S}^{m-1}$.
Then the cosine between unit vectors decomposes as
\begin{equation}\small
\langle \hat{\mathbf{k}}_i,\hat{\mathbf{q}}\rangle
= \sum_{b=1}^{B} r_{i,b}\,\langle \mathbf{u}_{i,b}, \tilde{\mathbf{q}}_{b}\rangle .
\label{eq:blockwise_ip}
\end{equation}

\textbf{Step 1: 4-bit direction quantization (1-bit sign + 3-bit magnitude).}
We encode each coordinate of $\mathbf{u}_{i,b}\in\mathbb{R}^m$ using a sign bit and a 3-bit magnitude bin:
\begin{equation}\small
s_{i,b,j} \triangleq \mathrm{sign}((\mathbf{u}_{i,b})_j)\in\{+1,-1\},\qquad
t_{i,b,j} \triangleq \mathrm{bucketize}_{\boldsymbol{\tau}}(|(\mathbf{u}_{i,b})_j|)\in\{0,\ldots,7\}.
\label{eq:rsqip_sign_mag_code}
\end{equation}
Given reconstruction levels $a[0],\ldots,a[7]$, we reconstruct a quantized direction $\mathbf{v}_{i,b}$ by
$(\mathbf{v}_{i,b})_j \triangleq s_{i,b,j}\,a[t_{i,b,j}]$.
We derive a \emph{data-independent} shared $(\boldsymbol{\tau},a[\cdot])$ offline from rotation-induced priors
(Appendix~\ref{app:beta_priors}).

\textbf{Step 2: Alignment-corrected subspace estimator.}
For each subspace, we approximate the inner product
$\langle \mathbf{u}_{k,b}, \tilde{\mathbf{q}}_b\rangle$
using a compact quantized direction code $\mathbf{v}_{k,b}$ together with a
\emph{correction factor} (a.k.a.\ alignment factor)
\begin{equation} \small
\alpha_{k,b} \triangleq \langle \mathbf{v}_{k,b}, \mathbf{u}_{k,b}\rangle.
\label{eq:alpha_def}
\end{equation}
Specifically, RSQ-IP employs the approximation
\begin{equation} \small
\langle \mathbf{u}_{k,b}, \tilde{\mathbf{q}}_b\rangle
\approx
\frac{\langle \mathbf{v}_{k,b}, \tilde{\mathbf{q}}_b\rangle}{\alpha_{k,b}}.
\label{eq:rsqip_correction}
\end{equation}

This form is inspired by RaBitQ's \cite{RaBitQ2024Gao} alignment-corrected estimator for \emph{unit-vector} inner products,
but here it is instantiated \emph{subspace-wise} and embedded into a rotation--decomposition pipeline tailored to KV cache.

\textbf{Step 3: Cosine estimator}
Combining~\eqref{eq:blockwise_ip} and~\eqref{eq:rsqip_correction}, we obtain a cosine-similarity estimator:
\begin{equation} \small
\widehat{\langle \hat{\mathbf{k}}, \hat{\mathbf{q}}\rangle}
\;\triangleq\;
\sum_{b=1}^{B} r_{k,b}\,
\frac{\langle \mathbf{v}_{k,b}, \tilde{\mathbf{q}}_b\rangle}{\alpha_{k,b}}.
\label{eq:rsqip_cosine}
\end{equation}

\textbf{Step 4: Recovering the \emph{raw} inner product for KV-cache reranking.}
The pre-softmax attention score uses the unnormalized inner product:
\begin{equation} \small
\langle \mathbf{k}, \mathbf{q}\rangle
=
\|\mathbf{k}\|_2\,\|\mathbf{q}\|_2\,\langle \hat{\mathbf{k}}, \hat{\mathbf{q}}\rangle.
\label{eq:raw_ip}
\end{equation}
Plugging~\eqref{eq:rsqip_cosine} into~\eqref{eq:raw_ip} yields the RSQ-IP estimator of the raw score:
\begin{equation} \small
\widehat{\langle \mathbf{k}, \mathbf{q}\rangle}
\;\triangleq\;
\|\mathbf{q}\|_2
\sum_{b=1}^{B}
\Big(\|\mathbf{k}\|_2\,r_{k,b}\Big)\,
\frac{\langle \mathbf{v}_{k,b}, \tilde{\mathbf{q}}_b\rangle}{\alpha_{k,b}}.
\label{eq:rsqip_raw}
\end{equation}

For efficiency, we precompute and store a per-subspace \emph{scaling factor}
\begin{equation} \small
w_{i,b}
\triangleq
\|\mathbf{k_i}\|_2 \,\frac{r_{i,b}}{\alpha_{i,b}},
\label{eq:rsqip_weight}
\end{equation}
which absorbs all key-dependent terms in the RSQ-IP estimator.
As a result, online reranking only requires computing the lightweight dot product
$\langle \mathbf{v}_{i,b},\tilde{\mathbf{q}}_b\rangle$ and accumulating
\begin{equation} \small
\widehat{\langle \mathbf{k}, \mathbf{q}\rangle}
=
\|\mathbf{q}\|_2 \sum_{b=1}^{B} w_{i,b}\,\langle \mathbf{v}_{i,b}, \tilde{\mathbf{q}}_b\rangle.
\label{eq:rsqip_raw_weighted}
\end{equation}

\begin{table}[t]
\centering
\small
\setlength{\tabcolsep}{4pt}
\renewcommand{\arraystretch}{0.95}
\begin{tabular}{@{}ll@{}}
\toprule
Symbol & Meaning \\
\midrule
$D$ & KV dimension ($256$) \\
$B$ & \#subspaces ($16$) \\
$m$ & subspace dim ($m{=}D/B{=}8$) \\
$\mathbf{k},\mathbf{q}\in\mathbb{R}^{D}$ & key / query \\
$\|\mathbf{k}\|_2,\|\mathbf{q}\|_2$ & $\ell_2$ norms \\
$\hat{\mathbf{k}},\hat{\mathbf{q}}$ & normalized vectors \\
$\mathbf{R}\in\mathbb{R}^{D\times D}$ & SRHT rotation \\
$\tilde{\mathbf{k}},\tilde{\mathbf{q}}$ & rotated vectors $\mathbf{R}\hat{\mathbf{k}},\mathbf{R}\hat{\mathbf{q}}$ \\
$\tilde{\mathbf{k}}_b,\tilde{\mathbf{q}}_b\in\mathbb{R}^m$ & $b$-th subspace subvectors \\
$r_{k,b}, r_{q,b}$ & subspace radii $\|\tilde{\mathbf{k}}_b\|_2,\|\tilde{\mathbf{q}}_b\|_2$ \\
$\mathbf{u}_{k,b}, \mathbf{u}_{q,b}$ & subspace directions $\tilde{\mathbf{k}}_b/r_{k,b},\tilde{\mathbf{q}}_b/r_{q,b}$ \\
$s_{k,b,j}$ & sign code of $(\mathbf{u}_{k,b})_j$ \\
$t_{k,b,j}$ & 3-bit magnitude bin of $|(\mathbf{u}_{k,b})_j|$ \\
$\boldsymbol{\tau}, a[\cdot]$ & magnitude thresholds / centers \\
$\mathbf{v}_{k,b}\in\mathbb{R}^m$ & reconstructed 4-bit direction \\
$\alpha_{k,b}$ & correction (alignment) factor \\
$w_{k,b}$ & stored weight $\|\mathbf{k}\|_2\,r_{k,b}/\alpha_{k,b}$ \\
$\langle \mathbf{k},\mathbf{q}\rangle$ & pre-softmax attention score \\
$\widehat{\langle \mathbf{k},\mathbf{q}\rangle}$ & RSQ-IP estimate \\
$\rho,\beta$ & collision ratio / candidate ratio \\
$k$ & final top-$k$ \\
\bottomrule
\end{tabular}
\caption{Notation for RSQ-IP and subspace-collision.}
\label{tab:notation_rsqip}
\end{table}

\begin{algorithm}[t]
\caption{ParisKV Two-Stage KV-Cache Retrieval (Per Head)}
\label{alg:pariskv_overview}
\begin{algorithmic}[1]
\STATE {\bfseries Input:} KV keys $\{\mathbf{k}_i\}_{i=1}^{n}$, query $\mathbf{q}$, params $(B,m)$, collision ratio $\rho$, candidate ratio $\beta$, final top-$k$
\STATE {\bfseries Output:} Indices $\mathcal{I}_{\text{top-}k}$

\STATE \textbf{Encode (prefill / streaming / offload).}
\FOR{$i=1$ \textbf{to} $n$}
    \STATE $\hat{\mathbf{k}}_i \gets \mathbf{k}_i/\|\mathbf{k}_i\|_2$; $\tilde{\mathbf{k}}_i \gets \mathbf{R}\hat{\mathbf{k}}_i$
    \STATE split $\tilde{\mathbf{k}}_i \rightarrow \{\tilde{\mathbf{k}}_{i,b}\}_{b=1}^{B}$
    \FOR{$b=1$ \textbf{to} $B$}
        \STATE $r_{i,b}\gets \|\tilde{\mathbf{k}}_{i,b}\|_2$; $\mathbf{u}_{i,b}\gets \tilde{\mathbf{k}}_{i,b}/r_{i,b}$
        \STATE store coarse direction code $\omega_{i,b}$; store 4-bit code $\mathbf{v}_{i,b}$, $\alpha_{i,b}$, weight $w_{i,b}$
    \ENDFOR
\ENDFOR
\STATE \textit{All selection runs on GPU; CPU is a backing store for full KV tensors.}

\STATE \textbf{Decode (online query).}
\STATE $\hat{\mathbf{q}}\gets \mathbf{q}/\|\mathbf{q}\|_2$; $\tilde{\mathbf{q}}\gets \mathbf{R}\hat{\mathbf{q}}$
\STATE split $\tilde{\mathbf{q}} \rightarrow \{\tilde{\mathbf{q}}_{b}\}_{b=1}^{B}$

\STATE \textbf{Stage I: coarse candidate generation (direction-only).}
\STATE initialize collision counter $S[i]\gets 0$ for all $i\in[n]$
\FOR{$b=1$ \textbf{to} $B$}
    \STATE select top clusters covering ratio $\rho$; update $S[i]$ by tier-weighted collisions
\ENDFOR
\STATE $\mathcal{C} \gets$ top-$\lceil \beta n\rceil$ indices by $S[i]$
\STATE \textbf{Stage II: RSQ-IP reranking on $\mathcal{C}$.}
\FOR{$i\in \mathcal{C}$}
    \STATE $\widehat{s}_i \gets \|\mathbf{q}\|_2 \sum_{b=1}^{B} w_{i,b}\,\langle \mathbf{v}_{i,b}, \tilde{\mathbf{q}}_{b}\rangle$
\ENDFOR
\STATE \textbf{Return} top-$k$ indices in $\mathcal{C}$ by $\widehat{s}_i$
\end{algorithmic}
\end{algorithm}

\section{Observations}

\section{Experiment}
\label{app:experiment}

\subsection{Experimental Setup and Baseline Configurations}
\label{app:experiment_setup}

We evaluate ParisKV against representative KV-cache retrieval and sparse-attention baselines, including Quest, MagicPIG, ShadowKV, FreeKV~\cite{liu2026freekv}, RetroInfer, PQCache, SOCKET~\cite{joshi2026socket}, and Twilight~\cite{lin2025twilight}. 
Different baselines define their effective KV-cache budgets differently. 
Some methods allocate tokens across local windows, sink tokens, outlier chunks, or sparse retrieval budgets, while others expose a single top-$k$ or page-level budget. 
Therefore, we follow the released implementation of each baseline and choose configurations that are at least as favorable as ParisKV whenever possible. 
In particular, for accuracy evaluation, we ensure that the effective budget of each baseline is no smaller than that of ParisKV, so that ParisKV is not advantaged by a larger retrieval budget.

\paragraph{MagicPIG.}
The original MagicPIG constructs LSH indices only over keys generated during the prefill phase and uses them to select top-$k$ candidates. 
Keys generated during the decode phase are still fully included in the attention computation. 
On reasoning tasks such as AIME2025, which typically have short inputs but long outputs, this design leads to little practical reduction in computation compared with full attention. 
To ensure a fair comparison in long-generation settings, we extend MagicPIG by treating keys from both the prefill and decode phases as the candidate set for top-$k$ selection.

\paragraph{ShadowKV.}
We evaluate ShadowKV under two configurations. 
The first setting uses $local\_chunk=32$ with $chunk\_size=8$, corresponding to $256$ local tokens, $outlier\_chunk=48$ with $chunk\_size=8$, corresponding to $384$ outlier tokens, and a $sparse\_budget$ of $512$ tokens for top-$k$ retrieval. 
This gives a total per-step budget of $1152$ tokens.

The second setting uses $local\_chunk=4$ with $chunk\_size=8$, corresponding to $32$ local tokens, $outlier\_chunk=48$ with $chunk\_size=8$, corresponding to $384$ outlier tokens, and a $sparse\_budget$ of $104$ tokens for top-$k$ retrieval. 
This gives a total per-step budget of $520$ tokens.

\paragraph{FreeKV.}
For FreeKV, we use $sink=64$, $local=256$, and a page budget of $4\times32=128$ tokens, resulting in a total effective budget of $448$ tokens.

\paragraph{Notation.}
NA denotes that a setting is unsupported by the released implementation. 
OOM denotes out-of-memory.

\subsubsection{Accuracy Evaluation}

Table~\ref{tab:longbenchv2_full_baselines} reports the full LongBench-V2 comparison with additional baselines that are omitted from the compact main-text table.
Table~\ref{tab:accuracy_comparison} reports the RULER breakdown at 128K context.
On LongBench-V2, the additional baselines exhibit uneven behavior across models and context buckets: RetroInfer is competitive on Qwen3-4B, Quest is strong on several Qwen-3-8B splits, and SOCKET/Twilight improve some DS-R1-Llama-8B cases.
Nevertheless, ParisKV provides the best overall accuracy on the DS-R1-Llama-8B setting and remains among the top methods on Qwen models while using the same fixed Top-$100$ retrieval budget as in the main experiments.
This indicates that the gains are not limited to comparisons with PQCache and MagicPIG, but persist against recent sparse-attention and KV-retrieval baselines.

RULER~\cite{hsieh2024ruler} further stresses retrieval with synthetic long-context tasks at 128K tokens.
ParisKV attains the highest average score among sparse/retrieval methods (83.49), outperforming ShadowKV (80.29), Quest (79.34), SOCKET (78.92), Twilight (77.60), RetroInfer (77.45), MagicPIG (70.49), and PQCache (33.76).
The improvement is especially visible on multi-key retrieval tasks such as \texttt{mk2\_niah}, where ParisKV closes much of the gap to full attention while preserving a sparse retrieval budget.

\begin{table*}[t]
\centering
\scriptsize
\renewcommand{\arraystretch}{0.82}
\setlength{\tabcolsep}{3.2pt}
\begin{tabular}{p{2.0cm}ccccccc}
\toprule
\textbf{Methods} & \textbf{Overall} &
\multicolumn{2}{c}{\textbf{Short}} &
\multicolumn{2}{c}{\textbf{Medium}} &
\multicolumn{2}{c}{\textbf{Long}} \\
\cmidrule(lr){3-4} \cmidrule(lr){5-6} \cmidrule(lr){7-8}
& & \textbf{Easy} & \textbf{Hard} & \textbf{Easy} & \textbf{Hard} & \textbf{Easy} & \textbf{Hard} \\
\midrule
{\textit{Qwen3-4B}}
& \textbf{25.84} & 27.12 & 16.53 & 36.36 & 25.20 & 26.67 & 28.57 \\
PQCache & 17.91 & 16.95 & 19.00 & 13.60 & 19.00 & 20.00 & 19.05 \\
MagicPIG & 16.70 & 18.64 & 10.74 & 14.77 & 20.47 & 28.89 & 12.70 \\
ShadowKV & 16.30 & 13.60 & 14.80 & 22.20 & 9.90 & 21.30 & 19.00 \\
FreeKV & 19.68 & 20.34 & 13.22 & 21.59 & 24.41 & 22.22 & 17.46 \\
Quest & 19.12 & 32.00 & 16.18 & 21.43 & 12.50 & 21.05 & 24.24 \\
RetroInfer & 23.69 & 34.50 & 31.70 & 22.70 & 22.20 & 13.60 & 9.70 \\
SOCKET & 17.93 & 20.00 & 22.06 & 16.67 & 12.50 & 15.79 & 21.21 \\
Twilight & 19.12 & 28.00 & 14.71 & 21.43 & 15.62 & 21.05 & 24.24 \\
\textbf{ParisKV (Ours)} & 24.60 & 35.59 & 19.49 & 26.14 & 22.05 & 28.89 & 23.81 \\
\midrule
{\textit{Qwen-3-8B}}
& \textbf{33.59} & 50.85 & 34.71 & 32.95 & 25.98 & 37.78 & 28.57 \\
PQCache & 25.50 & 23.70 & 31.60 & 28.20 & 24.00 & 25.00 & 20.00 \\
MagicPIG & 10.34 & 8.47 & 15.70 & 7.95 & 7.87 & 13.33 & 7.94 \\
ShadowKV & 15.90 & 40.70 & 23.10 & 6.80 & 13.40 & 6.70 & 3.20 \\
FreeKV & 15.31 & 28.81 & 24.79 & 15.91 & 11.81 & 6.70 & 3.20 \\
Quest & 23.90 & 36.00 & 25.00 & 28.57 & 15.62 & 31.58 & 18.18 \\
RetroInfer & 20.48 & 44.80 & 36.70 & 20.50 & 7.90 & 4.50 & 3.20 \\
SOCKET & 17.93 & 24.00 & 22.06 & 14.29 & 20.31 & 10.53 & 9.09 \\
Twilight & 20.32 & 36.00 & 26.47 & 19.05 & 12.50 & 15.79 & 15.15 \\
\textbf{ParisKV (Ours)} & 33.07 & 52.54 & 34.71 & 34.09 & 26.77 & 37.21 & 19.67 \\
\midrule
{\textit{DS-R1-Llam-8B}}
& 13.12 & 18.64 & 15.70 & 12.50 & 8.66 & 11.11 & 14.29 \\
PQCache & 19.90 & 18.60 & 21.50 & 21.60 & 22.20 & 15.60 & 14.30 \\
MagicPIG & 13.92 & 15.25 & 11.57 & 11.36 & 14.17 & 17.78 & 17.46 \\
ShadowKV & 14.51 & 18.60 & 23.10 & 13.60 & 14.20 & 2.20 & 4.80 \\
FreeKV & 17.50 & 44.07 & 30.58 & 9.09 & 8.66 & 6.67 & 4.76 \\
Quest & 23.51 & 24.00 & 27.94 & 28.57 & 14.06 & 26.32 & 24.24 \\
RetroInfer & 15.66 & 27.60 & 20.00 & 20.50 & 9.50 & 4.50 & 9.70 \\
SOCKET & 21.51 & 20.00 & 14.71 & 30.95 & 21.88 & 15.79 & 27.27 \\
Twilight & 20.72 & 20.00 & 14.71 & 26.19 & 23.44 & 15.79 & 24.24 \\
\rowcolor{blue!50!black!30}
\textbf{ParisKV (Ours)} & \textbf{28.43} & 37.29 & 25.62 & 28.41 & 25.20 & 31.11 & 30.16 \\
\bottomrule
\end{tabular}%
\caption{Full LongBench-V2 accuracy breakdown by context length and difficulty, including additional baselines.}
\label{tab:longbenchv2_full_baselines}
\end{table*}

\begin{table*}[t]
\centering
\scriptsize
\setlength{\tabcolsep}{2.6pt}
\renewcommand{\arraystretch}{0.86}
\begin{tabular}{l|cccccccccc|c}
\toprule
\textbf{Methods} & \textbf{s1\_niah} & \textbf{s2\_niah} &  \textbf{mk1\_niah} & \textbf{mk2\_niah} & \textbf{mv\_niah} & \textbf{mq\_niah} & \textbf{fwe} & \textbf{qa\_1} & \textbf{qa\_2} & \textbf{vt} & \textbf{Avg.} \\
\midrule
\textit{Llama-3.1-8B} & 100.00 & 100.00 & 96.88 & 89.58 & 98.44 & 99.74 & 71.18 & 86.46 & 51.04 & 47.92 & 84.12 \\
PQCache           & 5.21 & 50.00 & 45.83 & 30.21 & 16.67 & 20.31 & 55.21 & 68.75 & 40.62 & 4.79 & 33.76 \\
MagicPIG          & 100.00 & 94.79 & 83.33 & 34.38 & 73.44 & 78.12 & 70.08 & 73.96 & 42.71 & 53.33 & 70.49 \\
ShadowKV          & 100.00 & 98.96 & 96.88 & 73.96 & 89.06 & 96.35 & 61.11 & 81.25 & 50.00 & 55.42 & 80.29 \\
SOCKET            & 100.00 & 100.00 & 96.88 & 72.92 & 92.71 & 98.18 & 33.33 & 80.21 & 46.88 & 68.13 & 78.92 \\
Twilight          & 98.96 & 100.00 & 95.83 & 76.04 & 90.36 & 97.92 & 57.29 & 80.21 & 48.96 & 74.17 & 77.60 \\
RetroInfer        & 95.83 & 98.96 & 95.83 & 55.21 & 94.53 & 97.40 & 61.11 & 78.12 & 42.71 & 54.79 & 77.45 \\
Quest             & 99.00 & 98.96 & 94.79 & 64.58 & 83.59 & 94.79 & 62.85 & 79.17 & 44.79 & 70.83 & 79.34 \\
\textbf{ParisKV}  & \textbf{100.00} & \textbf{100.00} & \textbf{96.88} & \textbf{82.01} & \textbf{95.83} & \textbf{98.70} & \textbf{56.25} & \textbf{84.38} & \textbf{48.96} & \textbf{71.87} & \textbf{83.49} \\
\bottomrule
\end{tabular}%
\caption{\textbf{Model Accuracy on RULER with 128K Context.} Accuracy comparison across different methods on the full RULER breakdown.}
\label{tab:accuracy_comparison}
\end{table*}

\subsubsection{Efficiency Setup and Latency Measurements}

Table~\ref{tab:latency_full_comparison} reports the new long-context speed measurements.

\begin{table*}[t]
\centering
\scriptsize
\setlength{\tabcolsep}{2.5pt}
\renewcommand{\arraystretch}{0.98}
\resizebox{\textwidth}{!}{%
\begin{tabular}{c|cc|cc|cc|cc|cc|cc|cc|cc|cc}
\toprule
& \multicolumn{2}{c|}{Quest}
& \multicolumn{2}{c|}{Twilight}
& \multicolumn{2}{c|}{RetroInfer}
& \multicolumn{2}{c|}{Full}
& \multicolumn{2}{c|}{SOCKET}
& \multicolumn{2}{c|}{MagicPIG}
& \multicolumn{2}{c|}{PQCache}
& \multicolumn{2}{c|}{ShadowKV}
& \multicolumn{2}{c}{\textbf{ParisKV}} \\
\cline{2-19}
\textbf{Input Len.}
& \textbf{Pre.} & \textbf{Dec.}
& \textbf{Pre.} & \textbf{Dec.}
& \textbf{Pre.} & \textbf{Dec.}
& \textbf{Pre.} & \textbf{Dec.}
& \textbf{Pre.} & \textbf{Dec.}
& \textbf{Pre.} & \textbf{Dec.}
& \textbf{Pre.} & \textbf{Dec.}
& \textbf{Pre.} & \textbf{Dec.}
& \textbf{Pre.} & \textbf{Dec.} \\
\midrule
128K 
& 37.28 & 20.78 
& 34.56 & 1106.14 
& 35.74 & 29.14 
& 33.30 & 23.13 
& 38.29 & 225.00 
& 55.10 & 120.57 
& 25.50 & 243.91 
& 51.77 & 18.76 
& \textbf{43.00} & \textbf{24.42} \\
256K 
& 117.71 & 46.30 
& OOM & OOM 
& 126.71 & 37.53 
& 116.20 & 33.29 
& OOM & OOM 
& 141.10 & 230.24 
& 104.70 & 754.76 
& 152.12 & 19.11 
& \textbf{139.50} & \textbf{32.16} \\
384K 
& OOM & OOM 
& OOM & OOM 
& 274.85 & 33.31 
& OOM & OOM 
& OOM & OOM 
& 272.80 & 487.77 
& 238.80 & 1207.81 
& 309.26 & 20.74 
& \textbf{290.40} & \textbf{37.19} \\
\bottomrule
\end{tabular}%
}
\caption{\textbf{Prefill and decode latency comparison under long contexts.} For each method, ``Pre.'' denotes prefill latency, measured as time-to-first-token (TTFT) in seconds, and ``Dec.'' denotes decode latency in milliseconds per token at batch size 1. OOM denotes out-of-memory.}
\label{tab:latency_full_comparison}
\end{table*}

\textbf{Setup.}
We conduct experiments on Llama3.1-8B and Qwen3-8B using workloads derived from LongBench v2. Specifically, we select the first sample from the easy + long subset. Since samples in the long subset typically exceed 128K tokens, we truncate each input to 128K tokens to ensure consistent and reproducible comparisons. For a batch size of bs, we construct each batch using the first bs samples from the same subset. Unless otherwise specified, we set sink (steady tokens) to 128 and the local window to 512.

\textbf{Workloads}. We evaluate batch scaling across four context lengths: 64K / 128K / 256K / 384K. In addition, to examine scalability under extremely long contexts, we report results at 512K / 1024K with bs = 1.

\textbf{Metrics}. We report three primary efficiency metrics:
(1) Prefill latency (s); (2) Decode latency / TPOT (ms/step); and (3) Decoding throughput (tokens/s) (measured during the decoding stage). We also report normalized per-token latency, computed as (decode ms/step) ÷ batch size, to characterize the amortization effect from larger batches.

\textbf{Baselines}.
We use FlashAttention-2 as the implementation of the full-attention baseline. Both MagicPIG and PQCache follow their original algorithms and recommended hyperparameters from the respective papers. For MagicPIG, we keep L, K, and the retrieval strategy unchanged, and only align sink/local with ParisKV for fair comparison. For PQCache, we adopt the recommended top-k budget of 20\% to enable an efficiency comparison under a reasonably strong (and typically more accurate) configuration. This choice is motivated by the goal of comparing speed under comparable accuracy targets; however, we find that PQCache still falls short of ParisKV’s accuracy even at this budget.

Note that in the original design of MagicPIG, the retrieval region mainly applies during prefill, while newly generated tokens during decoding still participate in full attention. For our evaluated workloads with long inputs and short generation, this design choice does not alter the fundamental cost structure of prefill/decoding. 

Since PQCache does not support throughput evaluation under multi-batch settings, for batch scaling experiments we compare ParisKV only with full attention, while for single-batch long-context comparisons we evaluate ParisKV, MagicPIG, and PQCache together.

\subsection{Efficiency Evaluation}
\label{app:experiment_efficiency}
\textbf{Throughput}.
Fig.~\ref{fig:Longbench V2 Decoding throughput}  reports the decoding throughput of ParisKV and full attention under varying context lengths and batch sizes. Overall, ParisKV exhibits substantially better batch scalability across all comparable settings.
As the batch size increases, ParisKV continues to improve throughput, whereas full attention quickly hits out-of-memory (OOM), due to GPU memory constraints, failing to scale further.

For context lengths of 64K / 128K / 256K, ParisKV achieves clear throughput advantages in the scalable batch region. Specifically, on Llama3.1-8B, ParisKV improves the peak decoding throughput over full attention by $2.3$× / $2.1$× / $2.7$×, respectively. 
On Qwen3-8B, the corresponding improvements are $2.3$× / $2.2$× / $2.8$×. 
When the context length further increases to 384K, full attention runs into OOM even at bs=$1$, while ParisKV remains stable and scales up to bs=$3$, demonstrating its capability for longer-context inference.

\textbf{Kernel Optimization.} 
Our throughput gains are further supported by GPU kernel optimizations (Fig.~\ref{fig:torch_vs_kernel}).
Compared with Torch implementations, our collision operator achieves a $9.2\times$ speedup at $\text{KV Len}=256$K, 
UVA-based KV fetching is about $40\times$ faster, and the fused reranking kernel provides a consistent $3$--$4\times$ speedup.
The bucket-based Top-$k$ kernel reaches up to $9.4\times$ speedup on short contexts (e.g., 16K) and remains competitive as context grows.

Finally, ParisKV requires periodically updating newly generated keys into the codebook, which introduces non-trivial overhead if performed every decoding step. We therefore amortize this cost by performing codebook updates only once every fixed interval (e.g., every 256/512 decoding steps). Note that at 384K, ParisKV could potentially scale to even larger batch sizes; in our current setup, scalability is primarily bounded by the available CPU memory (167GB free), which limits further offloading of KV cache.

\textbf{Decode Latency (TPOT).}
Fig.~\ref{fig:TPOT} reports the decoding TPOT (ms/step) as a function of batch size under different context lengths. Overall, ParisKV scales smoothly with batch size: TPOT increases monotonically, while the normalized per-token latency (ms/step $\div$ bs) decreases due to amortization.

At 128K, ParisKV is already comparable to full attention at bs=$1$ (e.g., $24.32$ms/step vs. $24.87$ms/step on Qwen3-8B), and remains stable as batch size increases ($24.32$ $\rightarrow$ 58.92 ms/step from bs=$1$ to bs=$8$). 
In contrast, full attention becomes memory-bounded and hits OOM at bs$\geq$$4$.

The gap further widens at longer contexts. 
At 256K, full attention only runs at bs=$1$ and OOMs at bs$\geq$$2$, while ParisKV continues scaling up to bs=5 with TPOT around $60$--$66$ ms/step across both models. 
At 384K, full attention OOMs even at bs=$1$, whereas ParisKV remains runnable and scales to bs=$3$. These results explain ParisKV’s consistently higher peak decoding throughput in Fig.~\ref{fig:Longbench V2 Decoding throughput}.

\textbf{End-to-End Latency across KV-cache Retrieval Systems}
We further compare the end-to-end decoding latency of MagicPIG, PQCache, and ParisKV under longer contexts ranging from 128K to 1024K (Fig. \ref{fig:Longbench V2 Decoding throughput}, bs=$1$). The results show that ParisKV’s decoding latency increases only moderately as the context length grows, whereas both PQCache and MagicPIG exhibit significantly higher decoding latency at long contexts—especially PQCache, whose overhead rises sharply with context length.

Taking Llama3.1-8B as an example, ParisKV achieves $10.0$× / $23.5$× / $32.5$× / $38.9$× / $44.4$× lower decode latency than PQCache at 128K / 256K / 384K / 512K / 1024K, respectively. 
ParisKV also consistently outperforms MagicPIG under the same contexts, and the gap further widens as the context length increases. Compared with full attention, ParisKV remains in the same order of magnitude for decoding latency within the runnable context range; however, when the context reaches 384K and beyond, full attention runs into OOM and cannot support inference.

\textbf{Prefill Latency}.
Fig.~\ref{fig:latency_comparison_combined} compares the prefill latency across methods under increasing context lengths.
Overall, ParisKV shows \emph{comparable} prefill time to prior approaches, with differences largely attributable to additional one-time preprocessing in our pipeline (e.g., quantization, codebook construction/lookup, and KV offloading setup).
Since prefill is incurred once per request while decoding cost accumulates over many generated tokens, we focus our discussion on decoding latency/throughput in the main text, where ParisKV delivers clear and consistent speedups.

\begin{figure*}[!t]
  \centering
  \includegraphics[width=\textwidth]{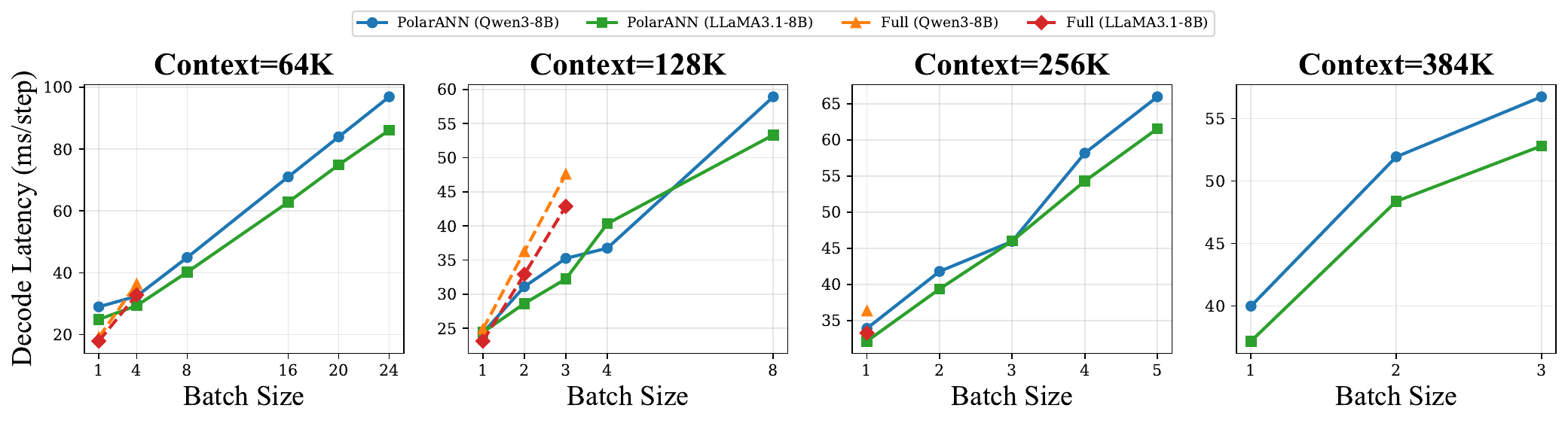}
  \caption{Time per output token (TPOT) vs. Context Length and Batch Size (qwen3-8b vs llama3.1-8b) }
  \label{fig:TPOT}
\end{figure*}

\end{document}